\documentclass[runningheads]{llncs}
\usepackage{booktabs}       
\usepackage{amsfonts}       
\usepackage{nicefrac}       
\usepackage{microtype}      
\usepackage{comment}
\usepackage{bbm}
\usepackage{algorithm}
\usepackage{algorithmic}
\usepackage{multirow}
\usepackage{siunitx,etoolbox}
\usepackage{float}
\usepackage{enumitem}
\usepackage{adjustbox}
\usepackage{subfigure}
\usepackage{booktabs} 
\usepackage[numbers]{natbib}

\usepackage{amsmath}
\usepackage{amssymb}
\usepackage{bm}
\usepackage{graphicx}
\usepackage{mathtools}
\usepackage{stmaryrd}
\usepackage{capt-of}
\usepackage{color}
\usepackage[title]{appendix}
\usepackage[symbol]{footmisc}


\usepackage{tikz}
\usetikzlibrary{shapes.geometric, arrows, positioning, decorations.markings}
\tikzstyle{state} = [rectangle, draw=black,align=center, minimum height=2em]
\tikzstyle{whiteArrow} = [thick, decoration={markings,mark=at position
   1 with {\arrow[semithick]{open triangle 60}}},
   double distance=1.4pt, shorten >= 5.5pt,
   preaction = {decorate},
   postaction = {draw,line width=1.4pt, white,shorten >= 4.5pt}]
\tikzstyle{blackArrow} = [thick,->,>=stealth, line width=0.5mm]
\tikzstyle{arrow} = [thick,->,>=stealth]

\newtheorem{assumption}{Assumption}

\def\th{{^{th}}}
\newcommand{\vect}[1]{\mathbf{\bm{#1}}} 
\newcommand{\mat}[1]{\mathbf{\bm{#1}}} 
\newcommand{\set}[1]{\left\{#1\right\}}
\newcommand{\instS}{\mathcal{X}} 
\newcommand{\classNum}{K} 
\newcommand{\hypo}{h} 
\newcommand{\distr}{\mathbb{P}} 
\newcommand{\ind}[1]{\left\llbracket#1\right\rrbracket} 
\newcommand{\range}[1]{\left[#1\right]} 
\newcommand{\dprod}[1]{\left\langle #1\right\rangle} 
\newcommand{\marg}{\eta}
\newcommand{\conf}{C} 
\newcommand{\loss}{L} 
\newcommand{\sConf}{\widehat{\conf}} 
\newcommand{\perf}{\Psi} 
\newcommand{\util}{\mathcal{U}} 
\newcommand{\expect}[1]{\mathbb{E}\left[#1\right]} 
\newcommand{\expectx}[2]{\mathbb{E}_{#1}\left[#2\right]} 
\newcommand{\prob}[1]{\Pr\left(#1\right)} 
\newcommand{\norm}[1]{\left\lVert#1\right\rVert}

\DeclareMathOperator{\grad}{\nabla} 
\DeclareMathOperator*{\argmax}{argmax}
\DeclareMathOperator*{\argmin}{argmin}
\DeclarePairedDelimiter{\abs}{\lvert}{\rvert}

\renewrobustcmd{\bfseries}{\fontseries{b}\selectfont}
\newrobustcmd{\B}{\bfseries}

\newcommand{\bh}{\mathbf{h}}
\newcommand{\cC}{\mathcal{C}}
\newcommand{\cH}{\mathcal{H}}
\newcommand{\cU}{\mathcal{U}}
\newcommand{\bC}{\mat{C}}
\newcommand{\bP}{\mathbb{P}}
\newcommand{\bR}{\mathbb{R}}

\newcommand{\bas}[1]{\begin{align*}#1\end{align*}}

\newcommand{\bfone}{\bm{1}}
\makeatletter
\newcommand*\bigcdot{\mathpalette\bigcdot@{1}}
\newcommand*\bigcdot@[2]{\mathbin{\vcenter{\hbox{\scalebox{#2}{$\m@th#1\bullet$}}}}}
\makeatother
\newcommand{\diag}{\text{diag}}

\begin{document}

\title{Consistent Classification with Generalized Metrics}

\author{Xiaoyan Wang\inst{1}\thanks{X. Wang and and R. Li contributed equally. X. Wang and and R. Li completed research while students at Illinois. B. Yan completed research while a student at UT Austin.} \and
Ran Li\inst{2} \and
Bowei Yan\inst{3} \and
Oluwasanmi Koyejo\inst{1,2}}

\authorrunning{X. Wang et al.}

\institute{Department of Computer Science, University of Illinois at Urbana-Champaign\\
\email{xiaoyan5@illinois.edu, sanmi.koyejo@gmail.com}\\
\and
Google Inc.
\email{ryannli1129@gmail.com}
\and
Independent Researcher
\email{yanbowei@gmail.com}}
\maketitle              

\begin{abstract}
We propose a framework for constructing and analyzing multiclass and multioutput classification metrics i.e., involving multiple, possibly correlated multiclass labels. Our analysis reveals novel insights on the geometry of feasible confusion tensors -- including necessary and sufficient conditions for the equivalence between optimizing an arbitrary non-decomposable metric and learning a weighted classifier.
Further, we analyze averaging methodologies commonly used to compute multioutput metrics and characterize the corresponding Bayes optimal classifiers. We show that the plug-in estimator based on this characterization is consistent and is easily implemented as a post-processing rule. Empirical results on synthetic and benchmark datasets support the theoretical findings.

\end{abstract}

\section{Introduction}
\label{introduction}

Learning with weighted losses is known to be a population optimal classification strategy (equiv. Bayes optimal) for a wide variety of performance metrics. For instance, weighted losses can be used to estimate Bayes optimal classifiers for binary classification with linear or fractional-linear metrics such as weighted accuracy and F-measure~\citep{NIPS2014_5454}, multiclass classification with linear, concave or fractional linear metrics such as weighed accuracy and ordinal loss~\citep{narasimhan2015consistent}, and multilabel classification with averaged linear and fractional-linear metrics~\citep{NIPS2015_5883}.

Perhaps due to these theoretical results and evident practical success, learning using weighted losses is a popular strategy for constructing predictive models when attempting to optimize complex classification metrics. Unfortunately, it is not known in general when the weighted classifier strategy is a convenient heuristic vs. when it results in provably consistent classifiers. This gap in the literature motivates the question, { {\it when is classification with weighted losses provably consistent?}} We provide an answer by characterizing necessary and sufficient conditions under which learning with weighted losses can recover population optimal classifiers. Interestingly, our results justify the use of weighted losses for many practical settings. For instance, we recover known results that monotonic metrics and fractional-linear metrics satisfy the necessary and sufficient conditions, and thus are optimized by the weighted classifier.

Beyond binary, multiclass and multilabel classification, multioutput learning, also variously known as multi-target, multi-objective, multi-dimensional learning, is the supervised learning problem where each instance is associated with multiple target variables\citep{Read:2014eu,Saha:2015ij}. Formalizing predictive problems in this way has led to empirical success in applied areas like natural language processing~\cite{rubin2012statistical} and computer vision~\cite{kendall2017multi,zhang2014facial}, where combining different tasks boosts the performance of each individual class.
For example, in a movie recommendation system, the learner must predict discrete user ratings for multiple movies simultaneously. In natural language processing, one can learn the POS tagging, chunking, and dependency parsing jointly~\cite{hashimoto2016joint}. In this manuscript, we aim to provide a theoretical understanding of multioutput classification problems i.e. where all outputs are discrete.

In particular, Bayes optimal and consistent classifiers for multioutput classifiers have so far remained unexplored. To this end, another goal of this manuscript is to characterize Bayes optimal multioutput classifiers for a broad range of metrics.

Perhaps the most popular approach for constructing multioutput classifiers in practice is by averaging multiclass metrics. Interestingly, this mirrors the popularity of averaged binary metrics for multilabel classification~\citep{NIPS2015_5883}. Averaged multiclass metrics are constructed by averaging with respect to examples separately for each output (macro-averaging), or with respect to both outputs and examples (micro-averaging). For such averaged metrics, when the classifier is given by a function of the confusion matrix, we characterize both necessary and sufficient conditions for the Bayes optimal classifier to be given by a simple weighted classifier -- specifically, the deterministic classifier which minimizes a weighted loss. We note that this result holds even when the outputs are highly correlated. Further, we show that the associated weights are shared by all the outputs when the metric is micro-averaged. Taken together, these results clarify the role of output correlations in averaged multioutput classification.

The family of fractional linear metrics is of special interest, as examples in this family include widely used metrics such as the multioutput averaged F-measure, among others. For fractional linear metrics, we propose a simple plug-in estimator that can be implemented as a post-processing rule (equiv. as a weighted classifier). We show that this plug-in classifier is consistent i.e. the population utility of the empirical estimator approaches the utility of the Bayes classifier with large samples. We also present experimental evaluation on synthetic and real-world benchmark datasets and a movie recommendation dataset comparing different estimation algorithms. The empirical results show that the proposed approach leads to improved classifiers in practice.

\paragraph{\bf Main contributions:}

\begin{enumerate}[noitemsep]
    \item We characterize necessary and sufficient conditions which determine when weighted classification results in a Bayes optimal multioutput classification. Our characterization recovers recent results on sufficient conditions for binary, multiclass, and multilabel classification.

    \item We show that even when labels are correlated, under standard assumptions, the Bayes optimal multioutput classifier decomposes across outputs.

    \item We propose a plug-in estimator for averaging of fractional-linear class performance metrics, and provide a thorough empirical evaluation. Further, we empirically analyze conditions where using the Bayes optimal procedure may be helpful and other cases where its use may not affect performance -- thus providing practical guidance.

\end{enumerate}

\vspace{-0.2cm}
\subsection{Related Work}
Perhaps due to increasing applied interest in complex classification metrics for specialized applications, there is a growing literature on the analysis and practical implementation of consistent classifiers. ~\citet{NIPS2014_5454} discuss consistent binary classifiers for generic ratios of linear metrics. ~\citet{tewari2007consistency} showed that multiclass classifiers constructed using consistent binary classifiers may still lead to inconsistent multiclass results. ~\citet{narasimhan2015consistent} further propose consistent multiclass classifiers for both concave and fractional-linear metrics. ~\citet{osokin2017structured} consider structured prediction with convex surrogate losses for decompose multiclass classification metrics (i.e. metrics that can be expressed as an average over samples).

Studies of multilabel classification~\citep{dembczynski2012label, dembczynski2010bayes} compare separate (label-wise independent) classification to a variety of correlated label approaches for optimizing hamming loss, showing that separate classification is often competitive. ~\citet{dembczynski2010regret} also critically highlight how the same multilabel classifier may not be optimal for different loss functions -- thus, classifiers must be appropriately tuned to metrics of interest. \citet{NIPS2015_5883} reveal a parametric form for population optimal multilabel classifiers which can be decomposed to binary classifiers and explore efficient algorithms for fractional of linear metrics. We note that  consistent binary, multiclass and multilabel classification are special cases of consistent multioutput classification.

While the multioutput problem is ubiquitous, much of the literature focuses on algorithms and applications -- and few (if any) prior work has considered consistency to the best of our knowledge.

One line of work proposes new algorithms for multioutput problems, most of which are designed to model the correlation relationships between the labels. Examples include the Bayesian chain classifier~\cite{zhang2010multi,zaragoza2011bayesian}, classifier trellises~\cite{read2015scalable}, and general graphical models~\cite{rubin2012statistical} among others.

\citet{BIELZA2011705} address the multioutput classification problem using what they call the multidimensional Bayesian network classifiers (MBCs). However, the only metric they consider is the 0-1 loss.~\citet{BORCHANI20121175} further propose a Markov blanket-based approach to train the MBCs from multioutput data.~\citet{Saha:2015ij} formulate the multitask multilabel framework to make predictions on data from Electronic Medical Records, and solve the iterative optimization problem by block co-ordinate descent.~\citet{Read:2014eu} solve the multi-dimensional (multioutput) classification problem by modeling label dependencies.


\section{Problem Setup and Notation}
\label{framework}

Consider the multioutput classification problem where $\mathcal{X}$ denotes the instance space and $\mathcal{Y} = \left[K\right]^M $ denotes the output space with $M$ outputs and $K$ classes per output. Without loss of generality, we assume that the number of classes is the same for all outputs, i.e. $K_m=K$ for $m = 1,\dots,M$. If the numbers of classes are different for each output, we can simply set $K = \max_m K_m$ i.e. padding classes with null labels as required.
We provide several such examples in practice in Section~\ref{experiment}.
Assume the instances and outputs follow some probability distribution $\mathbb{P}$ over the domain $\mathcal{X}\times\mathcal{Y}$.
A dataset is given by $N$ samples $(x^{(i)}, \mathbf{y}^{(i)}) \overset{\text{i.i.d}}{\sim} \mathbb{P}, i\in[N]$. Since $\mathbb{P}$ is general, the outputs could be highly correlated across outputs.

Define the set of randomized classifiers $\mathcal{H}_r=\{\bh:  \mathcal{X} \to (\Delta^K)^M \}$, where $\Delta^q = \set{\mathbf{p}\in [0,1]^q: \sum_{i=1}^q p_i = 1}$ is the $q-1$ dimensional probability simplex.
For any multioutput classifier, we can define the confusion tensor as follows.
Assume $\bh \in \cH_r$, and the prediction for the $m$th output is $h_m(\cdot) \in \bR^K$. Let $\vect{\eta}(x) \in \bR^{K\times M}$ denote the marginal class probability for any given instance $x$, whose $(k,m)$th element is the conditional probability of output $m$ belonging to class $k$: $\eta^m_k(x) = \bP(Y_m = k \mid X = x)$.
The population confusion tensor is $\mat{C}\in [0,1]^{M\times K\times K}$, with elements are defined as
\begin{align}
\mat{C}_{m,k,\ell} = \int_x h^m_k(x)\vect{\eta}^m_\ell(x)d\bP(x),
\label{eq:defC}
\end{align}
or equivalently, $\mat{C}_{m,k,\ell} = \bP( h^m_k(x)=1, Y_m=\ell)$.

The sample confusion tensor is defined as $\mat{\sConf}(\vect{h}) = \frac{1}{N} \sum_{n=1}^N \mat{\sConf}^{(n)}(\vect{h})$, where
$\mat{\sConf}^{(n)}(\vect{h}) \in \{ 0, 1 \}^{M\times K\times K}$, and $\sConf_{m,i,j}^{(n)}(\vect{h}) = \ind{y_m^{(n)}=i, \hypo_m(x^{(n)})=j}$. Here $\ind{\cdot}$ is the indicator function, so $\sum_{i=1}^{K}\sum_{j=1}^{K}\sConf_{m,i,j}^{(n)}(\vect{h})=1$.
For $\bh\in \cH_r$, due to the linearity of confusion tensor definition, we have $\mat{\sConf}^{(n)}(\vect{h}) \in \range{ 0, 1}^{M\times K\times K}$.
Note that for multiclass classification with a single output, the confusion tensor reduces to a $K\times K$ matrix, commonly simply known as the confusion matrix.

\paragraph{Performance Metrics}
We consider the general class of performance metrics $\cU_\Psi: \mathcal{H} \mapsto \mathbb{R}_+$ for multioutput problems, which can be represented as a function of the confusion tensor, i.e. $\cU_\Psi(\mathbf{h}) = \Psi(\mat{C}(\mathbf{h}))$. This setting has been studied in binary classification~\cite{pmlr-v80-yan18b}, multiclass classification~\cite{narasimhan2015consistent} and multilabel classification~\cite{koyejo2015consistent}.

The goal is to learn the Bayes classifier with respect to the given metric:
$$\vect{h}^*_{\Psi} \in \argmax_{\vect{h}}\; \mathcal{U}_\Psi(\vect{h}). $$
We denote the optimal utility as $\mathcal{U}^*_{\Psi} = \mathcal{U}_\Psi(\vect{h}^*_{\Psi})$.
We say a classifier $\vect{h}_N$ constructed using finite data of size $N$ is $\Psi$-consistent if $ \mathcal{U}_\Psi(\vect{h_N})\xrightarrow{ \mathbb{P} }\mathcal{U}^*_{\Psi}$. And to measure the non-asymptotic performance of a learned classifier, we define regret as follows.
\begin{definition}
\label{psi-regret}
($\perf$-regret). For any classifier $\mathbf{h}$ and a function $\perf$: $\left[0,1\right]^{M\times K \times K} \to \mathbb{R}_+$, define a $\perf$-regret of $\mathbf{h}$ w.r.t. distribution $\mathbb{P}$ as the difference between its $\perf$-performance and the optimal:
$\util^{*}_{\perf, \mathbb{P}} - \util_{\perf,\mathbb{P}}(\mathbf{h}).$
\end{definition}

\paragraph{Notation}
Throughout the paper, we use uppercased bold letters to represent tensors and matrices, and lowercased bold letters to represent vectors. Let $e_i$ represent the $i$th standard basis whose $i$th dimension is 1 and 0 otherwise $e_i=(0,\cdots,1,\cdots,0)$. We follow the tensor computation notation of~\citet{kolda2009tensor}.
For an order-$P$ tensor $\mat{A}\in \bR^{D_1\times\cdots\times D_P}$, we use $A_{i_1, \dots, i_P}$ to represent its elements, where $i_p\in [D_p]$ for $p = 1, \dots, P$; and $\mat{A}_{\cdot, \cdot, \cdots, i_p, \cdots, \cdot}$ to represent the $i_p$th mode-$D$ slice of $\mat{A}$, which is a $P-1$ dimensional tensor obtained by fixing the $p$th index to be $i_p$. For two tensors $\mat{A}, \mat{B}$ of the same dimension, we define their inner product as the sum of element-wise product over all corresponding positions, {\it i.e.}, $\dprod{\mat{A}, \mat{B}} = \sum_{i_1=1}^{D_1}\cdots\sum_{i_P=1}^{N_P} A_{i_1, \cdots, i_P}B_{i_1, \cdots, i_P}$. The $D$-mode (vector) product of a tensor $\mat{A}\in\bR^{D_1,\cdots,D_{j-1}, D_j, D_{j+1}\cdots,D_P}$ with a vector $\mat{u}\in \bR^{D_j}$ is denoted by $A\bigcdot_j \mat{u}$. The result is of order $P-1$ with size $D_1\times \cdots \times D_{j-1}\times D_{j+1}\times \cdots \times D_P$. Element-wise,
\bas{
(A\bigcdot_j \mat{u})_{i_1, \cdots, i_{j-1},i_{j+1},\cdots, i_P} = \sum_{k=1}^{D_j} A_{i_1,\cdots, i_{j-1},k,i_{j+1}, \cdots, i_P}\mat{u}_k
}
We use $\otimes$ to represent the outer product between vectors or tensors.

\section{Bayes Optimal Multioutput Classifiers}
\label{bayes}

In this section, we characterize the conditions for defining a Bayes optimal classifier under general performance metrics.

\subsection{Properties of Confusion Tensors}
\label{sec:conf_prop}
We consider the properties of the classifiers in the confusion tensor space. First, we define the set of feasible confusion tensors for all feasible classifiers.

\begin{definition}[Feasible Confusions]
  Given the distribution $\mathbb{P}$, Let $\mat{C}(\vect{h}) = \mathbb{E}{[\hat{\mat{C}}(\vect{h})]}$ denote the population confusion. The set of all feasible population confusions is given by:
  $\mathcal{C} = \set{\mat{C}(\vect{h}) \mid \bh \in \cH_r}$.
\end{definition}

By the linearity of the confusion tensor, we have the following.
\begin{lemma}[Convexity and compactness]
\label{lemma:ConvexCompact}
  The set $\mathcal{C}$ is convex and compact.
\end{lemma}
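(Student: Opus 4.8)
The plan is to treat convexity and compactness separately, with both resting on the fact that the map $\bh \mapsto \mat{C}(\bh)$ is linear in $\bh$. For convexity, I would take arbitrary $\bh_1, \bh_2 \in \cH_r$ and $\lambda \in [0,1]$ and form the pointwise mixture $\bh_\lambda = \lambda \bh_1 + (1-\lambda)\bh_2$. Because each simplex $\Delta^K$ is convex, $\lambda h_1^m(x) + (1-\lambda) h_2^m(x) \in \Delta^K$ for every output $m$ and every $x$, so $\bh_\lambda \in \cH_r$. Since each entry $\mat{C}_{m,k,\ell}(\bh) = \int_x h^m_k(x)\,\eta^m_\ell(x)\,d\bP(x)$ is linear in $\bh$, we get $\mat{C}(\bh_\lambda) = \lambda\mat{C}(\bh_1) + (1-\lambda)\mat{C}(\bh_2)$, exhibiting the convex combination as a feasible confusion. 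Hence $\mathcal{C}$ is convex.

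For compactness, note first that $\mathcal{C} \subseteq \mathbb{R}^{M\times K\times K}$ is finite dimensional, and boundedness is immediate: since $h^m_k(x), \eta^m_\ell(x) \in [0,1]$ and $\bP$ is a probability measure, every entry satisfies $0 \le \mat{C}_{m,k,\ell} \le 1$, so $\mathcal{C} \subseteq [0,1]^{M\times K\times K}$. The substantive part is closedness, for which I would pass to a functional-analytic description of the classifier space. View each randomized classifier as an element of $L^\infty(\bP;\mathbb{R}^{MK})$, and identify $\cH_r$ with the subset whose values lie in the compact convex set $(\Delta^K)^M$ almost everywhere. The map $T : \bh \mapsto \mat{C}(\bh)$ is linear, and coordinatewise it reads $\langle h^m_k, \eta^m_\ell\rangle$; since each $\eta^m_\ell$ is bounded and therefore lies in $L^1(\bP)$, each coordinate is a weak-$*$ continuous functional on $L^\infty = (L^1)^*$. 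Thus $T$ is weak-$*$ continuous. Meanwhile $\cH_r$ is weak-$*$ closed (each simplex constraint $h^m_k \ge 0$ a.e. and $\sum_k h^m_k = 1$ a.e. is an intersection of weak-$*$ closed half-spaces and hyperplanes) and norm-bounded, hence weak-$*$ compact by Banach--Alaoglu. As the continuous image of a compact set, $\mathcal{C} = T(\cH_r)$ is compact.

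The step I expect to be the main obstacle is precisely this closedness argument: one must verify that the pointwise simplex constraints defining $\cH_r$ survive weak-$*$ limits and that the integrability $\eta^m_\ell \in L^1(\bP)$ is what makes $T$ weak-$*$ continuous, rather than merely weakly continuous. Equivalently, and as a more self-contained alternative that avoids Banach--Alaoglu, I could identify $\mathcal{C}$ with the Aumann integral $\int_x \Gamma(x)\,d\bP(x)$ of the pointwise-achievable set $\Gamma(x) = \{(h^m_k(x)\,\eta^m_\ell(x))_{m,k,\ell} : \bh(x)\in(\Delta^K)^M\}$ — each $\Gamma(x)$ being a polytope, hence compact and convex — and invoke the standard compactness of integrals of such set-valued maps. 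Either route closes the gap, but the weak-$*$ continuity argument seems the cleanest to write out rigorously.
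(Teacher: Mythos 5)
Your proof is correct and follows essentially the same route as the paper: convexity via linearity of $\bh \mapsto \mat{C}(\bh)$ applied to pointwise mixtures of classifiers, and compactness via compactness of the classifier space combined with continuity of this bounded linear map. You in fact make rigorous the detail the paper elides in its one-line claim that $\mathcal{H}$ is ``a compact function space'' (false in the norm topology of $L^\infty$): the correct reading is weak-$*$ compactness in $L^\infty(\bP) = (L^1(\bP))^*$, which your Banach--Alaoglu argument, the verification that the simplex constraints are weak-$*$ closed via pairings with $L^1$ functions, and the observation that each coordinate $\bh \mapsto \int h^m_k \,\eta^m_\ell \, d\bP$ is weak-$*$ continuous because $\eta^m_\ell \in L^1(\bP)$, together supply exactly.
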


The existence of an optimal classifier follows from the compactness and the convexity of $\cC$. The Bayes confusion matrix corresponding to the utility $\cU_\Psi(\mathbf{h})$ is denoted by $\mat{C}^* = \mat{C}(\vect{h}^*_{\Psi})= \max_{\bC} \Psi(\bC)$.
 Moreover, we have the following property of the optimal confusion tensor.

\begin{lemma}
  \label{lem:supporting}
  Let $\mathcal{C}$ be the set of feasible confusions with boundary $\partial \mathcal{C}$.
  If $\mat{C}^*\in \partial \cC$, then there exists $\mat{L} \in \bR^{M\times K \times K}$ such that $\mat{C}^* \in \arg\min_{\bC} \dprod{ \mat{L}, \mat{C}}$.
\end{lemma}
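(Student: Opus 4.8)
The plan is to recognize the statement as an instance of the supporting hyperplane theorem, specialized to the confusion-tensor space. Lemma~\ref{lemma:ConvexCompact} already guarantees that $\cC$ is a convex and compact subset of the finite-dimensional Euclidean space $\bR^{M\times K\times K}$, equipped with the inner product $\dprod{\cdot,\cdot}$ introduced in the Notation paragraph. Since $\mat{C}^*$ is assumed to lie on the boundary $\partial\cC$, a supporting hyperplane through $\mat{C}^*$ must exist, and I would read off the desired tensor $\mat{L}$ as the (negated) normal of that hyperplane.

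Concretely, first I would invoke the supporting hyperplane theorem: for any nonempty convex set and any point of its boundary, there is a nonzero linear functional attaining its extremum over the set at that point. Because the ambient space $\bR^{M\times K\times K}$ is finite dimensional, every linear functional is represented as $\mat{A}\mapsto\dprod{\mat{a},\mat{A}}$ for some tensor $\mat{a}$, so the theorem yields a nonzero $\mat{a}$ with $\dprod{\mat{a},\mat{C}}\le\dprod{\mat{a},\mat{C}^*}$ for every $\mat{C}\in\cC$. Setting $\mat{L}=-\mat{a}$ reverses the inequality to $\dprod{\mat{L},\mat{C}^*}\le\dprod{\mat{L},\mat{C}}$ for all $\mat{C}\in\cC$, which is exactly the claim that $\mat{C}^*\in\arg\min_{\bC}\dprod{\mat{L},\mat{C}}$. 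Note that the lemma asserts only the existence of $\mat{L}$, so no normalization or uniqueness argument is needed.

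The one point that requires care, and which I expect to be the only real obstacle, is that $\cC$ need not be full dimensional. The confusion-tensor entries obey linear constraints (for each output $m$, the entries of the slice $\mat{C}_{m,\cdot,\cdot}$ sum to one), so $\cC$ lives in a proper affine subspace and may have empty interior, in which case $\partial\cC=\cC$. I would handle this by appealing to the form of the supporting hyperplane theorem that holds at every boundary point of a convex set without any interior hypothesis: when the relative interior is approached from within the affine hull the hyperplane is obtained by separating $\mat{C}^*$ from that relative interior, and when $\cC$ is genuinely lower dimensional any nonzero normal to its affine hull already supports $\cC$ at $\mat{C}^*$ (indeed with equality throughout, so $\dprod{\mat{L},\cdot}$ is constant on $\cC$ and $\mat{C}^*$ is trivially a minimizer). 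This short case analysis, together with the identification of linear functionals via $\dprod{\cdot,\cdot}$, completes the argument.
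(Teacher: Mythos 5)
Your proof follows essentially the same route as the paper's: both rest on Lemma~\ref{lemma:ConvexCompact} together with the supporting hyperplane theorem for compact convex sets (the paper's Lemma~\ref{lem:base_supporting}, cited from Rockafellar), reading off $\mat{L}$ as the (negated) normal of the supporting hyperplane at $\mat{C}^*$. If anything, your version is slightly more careful than the paper's own proof, which states the conclusion as an $\argmax$ (leaving the sign flip to the loss's scale/sign invariance implicit) and does not address the possibility that $\cC$ has empty interior.
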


In other words, optimizing any metric that satisfies $\mat{C}^*\in \partial \cC$ can be reduced in maximizing a weighed loss. Thus, we state that any utility which satisfies $\mat{C}^* \in \arg\min_{\bC} \dprod{ \mat{L}, \mat{C}}$ for some $\mat{L}$ admits a weighted Bayes optimal (we will define this formally in the sequel). Importantly, we note that $\mat{L}$ is not unique, since the optimization is invariant to global scale and global additive constants. In the sequel, will usually assume $\norm{\mat{L}}=1$ for some norm $\norm{\cdot}$.

On the other hand, we claim that if the Bayes optimal follows a weighted form, then the optimal confusion tensor necessarily lies on the boundary of the feasible set.

\begin{lemma}
  \label{lemma:only}
  If the utility function admits a weighted Bayes optimal i.e. $\exists \, \mat{L}$ such that  $\bC^* \in \arg\min_{\bC} \dprod{ \mat{L}, \mat{C}}$ then the Bayes optimal confusion $\mathbf{C}^* \in \partial \mathcal{C}$.
\end{lemma}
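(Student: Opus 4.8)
The plan is to prove this as the exact converse of Lemma~\ref{lem:supporting}, by contradiction, leaning on the convexity and compactness of $\cC$ from Lemma~\ref{lemma:ConvexCompact} together with the elementary convex-analytic fact that a non-constant linear functional cannot attain its minimum over a convex set at a relative interior point. Concretely, I would suppose the conclusion fails, i.e. $\bC^* \notin \partial \cC$ so that $\bC^*$ lies in the (relative) interior of $\cC$, and then exhibit a feasible direction along which $\dprod{\mat{L}, \mat{C}}$ strictly decreases, contradicting the hypothesis $\bC^* \in \arg\min_{\bC}\dprod{\mat{L}, \mat{C}}$.

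The key steps, in order, are as follows. First, since $\bC^*$ is assumed interior, there is some $\varepsilon > 0$ such that every confusion tensor lying in the intersection of the $\varepsilon$-ball about $\bC^*$ with the affine hull of $\cC$ is again feasible. Second, because $\mat{L}$ is a genuine weight inducing a non-constant objective on $\cC$, there is a direction $\mat{D}$ in the tangent space of $\cC$ at $\bC^*$ with $\dprod{\mat{L}, \mat{D}} \ne 0$; replacing $\mat{D}$ by $-\mat{D}$ if necessary we may take $\dprod{\mat{L}, \mat{D}} < 0$. Third, by Step one the perturbation $\bC^* + t\mat{D}$ remains in $\cC$ for all sufficiently small $t > 0$, while linearity gives $\dprod{\mat{L}, \bC^* + t\mat{D}} = \dprod{\mat{L}, \bC^*} + t\,\dprod{\mat{L}, \mat{D}} < \dprod{\mat{L}, \bC^*}$, which contradicts the minimality of $\bC^*$. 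Hence $\bC^* \in \partial \cC$. (Equivalently, a two-sided version of this argument shows that an interior minimizer would force $\dprod{\mat{L}, \mat{D}} = 0$ for every tangent direction $\mat{D}$, i.e. $\mat{L}$ orthogonal to the tangent space and the functional constant on $\cC$, which one then excludes.)

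The step I expect to require the most care is the geometry of the feasibility set. Because each confusion tensor inherits the marginal constraints $\sum_{k} \mat{C}_{m,k,\ell} = \bP(Y_m = \ell)$ from $\sum_k h^m_k(x) = 1$, the set $\cC$ sits inside a proper affine subspace of $\bR^{M\times K\times K}$ and therefore has empty interior in the ambient topology. Consequently $\partial \cC$ must be read as the boundary relative to $\mathrm{aff}(\cC)$, and the admissible directions $\mat{D}$ above must be restricted to the associated tangent subspace $\{\mat{D}: \sum_k \mat{D}_{m,k,\ell} = 0 \ \forall m,\ell\}$. The companion subtlety, which is what makes the contrapositive go through, is excluding the degenerate weights whose entries $\mat{L}_{m,k,\ell}$ do not depend on the predicted class $k$: for such $\mat{L}$ the objective collapses to the constant $\sum_{m,\ell}\mat{L}_{m,k,\ell}\,\bP(Y_m=\ell)$, so every feasible point is trivially a minimizer and the statement would be vacuous. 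Ruling these out — equivalently insisting, under the normalization $\norm{\mat{L}} = 1$, that the weighted objective genuinely orders the feasible confusions — is precisely the regime in which "admits a weighted Bayes optimal" is meaningful, and it is exactly what supplies the descent direction in Step two.
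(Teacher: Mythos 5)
Your proof is correct and takes essentially the same route as the paper: the paper disposes of this lemma in a single line by invoking the fact that a linear functional over the compact convex set $\cC$ attains its optimum on the boundary, and your interior-point perturbation argument is precisely the standard proof of that fact written out. The two caveats you raise --- that $\partial\cC$ must be read relative to $\mathrm{aff}(\cC)$, since the constraints $\sum_k \mat{C}_{m,k,\ell} = \bP(Y_m=\ell)$ leave $\cC$ with empty ambient interior, and that weights $\mat{L}$ inducing a constant objective on $\cC$ must be excluded for the claim to hold at every minimizer --- are genuine subtleties the paper's one-line proof silently ignores, so your version is strictly more careful than the original.
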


Taken together, Lemma~\ref{lem:supporting} and Lemma~\ref{lemma:only} characterize necessary and sufficient conditions for the Bayes optimality of weighted classifiers. For specific classes of metrics, the loss tensor can be characterized in closed-form.

\begin{definition}[Monotonic Metrics]\label{monotinic}
  A metric $\Psi$ is strictly monotonic if $\, \forall \, m \in [M]$, $\Psi$ is non-decreasing with respect to all elements of $\{ \mat{C}_{m,i,i} \mid i \in [K] \}$ and non-increasing with respect to all elements of $\{\mat{C}_{m,i,j} \mid i, j \in [K]\}$.
\end{definition}

Monotonic metrics are ubiquitous, as they capture the intuition that the utility should reward better performance and penalize worse performance (as measured by the confusion entries). To our knowledge, all classification metrics in common use satisfy monotonicity. For monotonic and differentiable metrics, we can characterize the loss tensor by the negative gradient of $\Psi$ at the optimal confusion tensor.

\begin{lemma}[Optimal confusion tensor for monotonic and differentiable metric function]
\label{lem:gradient_support}
Let $\Psi$ be a monotonic and differentiable metric, and $\mat{C}^* = \mat{C}(\bh^*_{\Psi})$.
Then $\mat{C}^* \in \max_{\bC} \dprod{ \nabla \Psi(\bC^*), \mat{C}} $.
\end{lemma}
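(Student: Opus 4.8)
The plan is to invoke the standard first-order optimality condition for maximizing a differentiable objective over a convex set. First I would recall that by Lemma~\ref{lemma:ConvexCompact} the feasible set $\cC$ is convex, so for any $\bC \in \cC$ the entire segment $\bC_t := \bC^* + t(\bC - \bC^*)$, $t \in [0,1]$, remains feasible. Since $\bC^* = \bC(\bh^*_{\Psi})$ maximizes $\Psi$ over $\cC$, the scalar function $g(t) := \Psi(\bC_t)$ attains its maximum over $[0,1]$ at $t = 0$, and hence its right derivative there is non-positive, $g'(0) \le 0$.

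Next, because $\Psi$ is differentiable, the chain rule gives $g'(0) = \dprod{\nabla\Psi(\bC^*),\, \bC - \bC^*}$, so the optimality of $\bC^*$ yields the variational inequality $\dprod{\nabla\Psi(\bC^*),\, \bC - \bC^*} \le 0$ for every $\bC \in \cC$. Rearranging, $\dprod{\nabla\Psi(\bC^*),\, \bC} \le \dprod{\nabla\Psi(\bC^*),\, \bC^*}$ for all feasible $\bC$, which is precisely the assertion that $\bC^* \in \argmax_{\bC} \dprod{\nabla\Psi(\bC^*),\, \bC}$. This step is where the bulk of the logical content lies, yet it is essentially routine once the segment argument is in place.

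Finally, I would use monotonicity to tie this back to the weighted-classifier characterization of Lemma~\ref{lem:supporting}: setting $\mat{L} = -\nabla\Psi(\bC^*)$ converts the $\argmax$ above into an $\argmin$ of $\dprod{\mat{L},\, \bC}$, so $\bC^*$ is recovered by a weighted loss. Definition~\ref{monotinic} guarantees that $\nabla\Psi(\bC^*)$ carries the prescribed sign pattern (non-negative in the diagonal directions $\bC_{m,i,i}$, non-positive in the off-diagonal directions), so $\mat{L}$ is a genuine loss tensor rather than the trivial zero tensor; monotonicity also forces $\bC^* \in \partial\cC$ by ruling out an interior maximizer, which is what links this lemma to Lemma~\ref{lem:supporting}.

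The argument is largely self-contained, and I do not anticipate a substantive obstacle. The one point demanding care is the passage from the finite-difference inequality to the directional derivative: the segment lies in $\cC$ only for $t \ge 0$, so only the one-sided limit is available, and it is the differentiability of $\Psi$ at $\bC^*$ (not merely at interior points) that justifies identifying $g'(0)$ with the gradient inner product. Beyond that, the remaining work is bookkeeping—stating the first-order condition cleanly and verifying that the monotonicity hypothesis supplies the sign structure making $-\nabla\Psi(\bC^*)$ a valid loss tensor.
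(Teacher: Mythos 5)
Your proof is correct, and it takes a genuinely different route from the paper's. The paper argues geometrically: monotonicity places $\mat{C}^*$ on $\partial\cC$, the supporting-hyperplane machinery behind Lemma~\ref{lem:supporting} (via Lemma~\ref{lem:base_supporting}) supplies a linear functional maximized at $\mat{C}^*$, and an appeal to KKT conditions identifies that hyperplane's normal with the gradient of $\Psi$ at $\bC^*$ --- with this last step left as a sketch. You instead derive the claim directly as the first-order optimality condition for maximizing a differentiable function over the convex set $\cC$ (convexity from Lemma~\ref{lemma:ConvexCompact}): the segment $\bC^*+t(\bC-\bC^*)$ stays feasible for $t\in[0,1]$, the one-sided derivative of $t\mapsto\Psi\left(\bC^*+t(\bC-\bC^*)\right)$ at $t=0$ is non-positive, and the chain rule yields the variational inequality $\dprod{\nabla\Psi(\bC^*),\,\bC-\bC^*}\le 0$ for all $\bC\in\cC$, which is precisely the assertion. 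Your argument buys elementarity and completeness: it bypasses both the boundary step and the KKT invocation, and it in fact establishes the conclusion for \emph{any} differentiable $\Psi$, monotonicity being needed only for your closing remarks about the sign pattern of the loss tensor and about $\bC^*\in\partial\cC$. The paper's route, for its part, keeps the lemma aligned with the supporting-hyperplane picture that organizes Lemmas~\ref{lem:supporting} and~\ref{lemma:only} and the subsequent weighted-classifier characterization. Two small cautions on your side remarks, neither affecting the lemma's statement: ruling out an interior maximizer requires the \emph{strict} reading of Definition~\ref{monotinic}, since a merely non-decreasing $\Psi$ (e.g.\ a constant) can attain its maximum in the interior of $\cC$; and in the degenerate case $\nabla\Psi(\bC^*)=0$ the conclusion still holds, but trivially, with every feasible $\bC$ maximizing the linearized objective.
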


Thus, without loss of generality, we can fix the loss matrix for monotonic and differentiable metrics as $\mat{L} = 1-\nabla \Psi(\bC^*)$.
Note that $\nabla \Psi(\bC^*)$ does not depend on $\bC^*$ for weighted losses such as 0-1 loss (i.e. accuracy), thus can be calculated in closed form. While the Bayes optimal is a weighted classifier, obtaining the loss tensor is sometimes non-trivial. We propose an iterative algorithm in Section~\ref{algorithm}.

Finally, we also note that as a straightforward consequence of convexity of the feasible set, every confusion matrix can be computed as a mixture of two boundary points; thus all Bayes optimal classifiers can be expressed as a mixture of two weighted classifiers. This straightforward corollary is stated more formally without proof.
\begin{corollary}[All Bayes Optima]
  \label{lemma:all}
  Any Bayes optimal confusion $\mathbf{C}^* \in \mathcal{C}$ can be expressed as the mixture of two weighted confusion matrices $\mathbf{C}^* = \alpha\mathbf{C}_1 + (1-\alpha)\mathbf{C}_2,$ where $\alpha \in [0, 1]$ and  $\bC_i \in \arg\min_{\bC} \dprod{ \mat{L}_i, \mat{C}}$ for some $\{\mat{L}_1, \mat{L}_2\}.$
\end{corollary}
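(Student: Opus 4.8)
The plan is to reduce the statement to two facts already in hand: the convexity and compactness of $\cC$ (Lemma~\ref{lemma:ConvexCompact}), and the characterization of boundary points as weighted confusions, i.e. minimizers of a linear functional $\dprod{\mat{L},\bC}$ (Lemma~\ref{lem:supporting}, whose supporting-hyperplane argument in fact applies to \emph{any} point of $\partial\cC$, not only to $\bC^*$). Given these, the corollary follows from the elementary geometric observation that any point of a compact convex set lies on a chord whose endpoints are on the boundary. Note that Bayes optimality of $\bC^*$ plays no role; the decomposition holds for every $\bC \in \cC$.

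First I would dispose of the trivial case $\bC^* \in \partial\cC$: here Lemma~\ref{lem:supporting} already supplies an $\mat{L}$ with $\bC^* \in \arg\min_{\bC}\dprod{\mat{L},\bC}$, so one takes $\bC_1 = \bC_2 = \bC^*$, $\mat{L}_1 = \mat{L}_2 = \mat{L}$, and any $\alpha \in [0,1]$. Next, suppose $\bC^*$ lies in the (relative) interior of $\cC$. Fix a nonzero direction $\mat{D}$ in the affine hull of $\cC$ and consider the line $t \mapsto \bC^* + t\mat{D}$. Because $\cC$ is compact, the set of $t$ with $\bC^* + t\mat{D}\in\cC$ is a closed bounded interval $[t_-,t_+]$ with $t_- < 0 < t_+$; set $\bC_1 = \bC^* + t_+\mat{D}$ and $\bC_2 = \bC^* + t_-\mat{D}$. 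Both endpoints lie on $\partial\cC$, since moving farther along $\pm\mat{D}$ exits $\cC$, and $\bC^*$ is the convex combination $\alpha\bC_1 + (1-\alpha)\bC_2$ with $\alpha = -t_-/(t_+ - t_-) \in (0,1)$. Applying Lemma~\ref{lem:supporting} to each endpoint yields loss tensors $\mat{L}_1, \mat{L}_2$ with $\bC_i \in \arg\min_{\bC}\dprod{\mat{L}_i,\bC}$, which is exactly the claimed decomposition.

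The step I would treat most carefully — and the only genuine subtlety — is the lower-dimensionality of $\cC$. Since the confusion tensor satisfies the $M$ affine constraints $\sum_{k,\ell}\bC_{m,k,\ell}=1$, the set $\cC$ has empty interior in $\bR^{M\times K\times K}$, so all topological notions must be taken relative to its affine hull. Choosing $\mat{D}$ within this affine hull guarantees the chord stays in the correct ambient space and that the two endpoints are genuine \emph{relative} boundary points, to which Lemma~\ref{lem:supporting} still applies via the supporting hyperplane theorem in the affine hull. The degenerate case where $\cC$ is a single point falls under the boundary case above, so no generality is lost.
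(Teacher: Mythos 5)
Your proof is correct and takes essentially the approach the paper itself indicates for this corollary, which it explicitly states without proof as a ``straightforward consequence of convexity'': express $\bC^*$ as a point on a chord of the compact convex set $\cC$ with endpoints on the boundary, then apply the supporting-hyperplane characterization of Lemma~\ref{lem:supporting} to each endpoint. Your extra care with the relative (rather than ambient) boundary, forced by the affine constraints $\sum_{k,\ell}\bC_{m,k,\ell}=1$, is a legitimate refinement the paper glosses over, but it does not change the route.
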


\subsection{Weighted Bayes Optimal Classifiers}
The results in Section~\ref{sec:conf_prop} are concerned only with the confusion tensor, and do not enforce any assumption on the classifier or the data distribution.
When the joint distribution of the data is well-behaved, we can extend the optimization over confusion tensors into optimization over the corresponding classifiers.
We introduce the following assumption on the joint distribution of the data.
\begin{assumption}\label{assumption1}
Assume $\mathbb{P}(\{ \vect{\eta}^m(X) = \mathbf{c} \}) = 0 \; \forall \mathbf{c} \in \Delta^K, \; m \in [M]$.
Furthermore, let $Z^m= \vect{\eta}^m(X)$ with density $p_{\eta}(Z^m)$. For all $m \in [M]$, $p_{\eta}(Z^m)$ is absolutely continuous with respect to the Lebesgue measure restricted to $\Delta^K$.
\label{ass:data}
\end{assumption}
Analogous regularity assumptions are widely employed in literature on designing well-defined complex classification metrics and seem to be unavoidable (we refer interested reader to~\cite{pmlr-v80-yan18b,narasimhan2015consistent} for details).
The Bayes optimal classifier for linear multioutput metrics takes a particularly simple form.
\begin{definition}
\label{def:Min-Form}
We say a metric $\cU_\Psi(\mathbf{h})$ admits a weighted Bayes optimal classifier $\bh^*$ if there exists a loss tensor $\mat{L}\in \bR^{M\times K\times K}$, such that  $\bh^{*m}(x)=e_{\ind{ k=\arg\min_k \dprod{\mat{L}_{k\cdot m}, \mat{\eta}^m} }}.$
\end{definition}

\begin{theorem}
\label{thrm:Min-Form}
Under Assumption~\ref{ass:data}, when the metric $\cU_\Psi(\mathbf{h})$ admits a weighted Bayes optimal (Definition~\ref{def:Min-Form}), $\exists$ a Bayes optimal classifier $\bh^*$ which satisfies $\bh^{*m}(x)=e_{\ind{ k=\arg\min_k \dprod{\mat{L}_{k\cdot m}, \mat{\eta}^m} }}$, where $\mat{L}$ is as defined in Lemma~\ref{lem:supporting}.
\label{thm:max_over_eta}
\end{theorem}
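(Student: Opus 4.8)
The plan is to use the linearity of the confusion tensor in $\bh$ to reduce the population-level optimization to a pointwise, per-output minimization over the simplex, and then to invoke Assumption~\ref{ass:data} to identify the minimizer. Since the metric admits a weighted Bayes optimal, Lemma~\ref{lem:supporting} supplies a loss tensor $\mat{L}$ with $\bC^* \in \arg\min_{\bC} \dprod{\mat{L}, \bC}$, and because $\cC = \set{\bC(\bh) \mid \bh \in \cH_r}$, minimizing $\dprod{\mat{L},\bC}$ over $\cC$ is the same as minimizing $\dprod{\mat{L}, \bC(\bh)}$ over $\bh \in \cH_r$. It therefore suffices to show that (i) the deterministic per-output classifier $\bh^*$ of Definition~\ref{def:Min-Form} minimizes this linear functional, and (ii) its confusion coincides with $\bC^*$, so that $\Psi(\bC(\bh^*)) = \Psi(\bC^*) = \util^*_{\Psi}$ and $\bh^*$ is Bayes optimal.

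For step (i) I would substitute the definition~\eqref{eq:defC} of the confusion tensor and apply Fubini to obtain
\[
\dprod{\mat{L}, \bC(\bh)} = \int_x \sum_{m=1}^{M}\sum_{k=1}^{K} h^m_k(x)\,\dprod{\mat{L}_{k\cdot m}, \mat{\eta}^m(x)}\, d\bP(x),
\]
where $\dprod{\mat{L}_{k\cdot m}, \mat{\eta}^m(x)} = \sum_{\ell} \mat{L}_{m,k,\ell}\,\eta^m_\ell(x)$ is the expected loss of predicting class $k$ on output $m$ at the instance $x$. Because the constraints $\bh^m(x) \in \Delta^K$ decouple across $x$ and across $m$, the integral is minimized by minimizing each inner sum separately; as the inner sum is linear in $\bh^m(x)$ over the simplex, its minimum is attained at the vertex $e_{\arg\min_k \dprod{\mat{L}_{k\cdot m}, \mat{\eta}^m(x)}}$, which is precisely $\bh^{*m}(x)$. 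The pointwise minimizer is a measurable function of $x$ (an argmin of finitely many measurable maps), so $\bh^*$ is a well-defined element of $\cH_r$, and the standard interchange of minimization and integration then yields that $\bh^*$ attains $\min_{\bh}\dprod{\mat{L},\bC(\bh)}$.

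For step (ii) I would use Assumption~\ref{ass:data} to argue that the minimizer is almost surely unique. Any minimizer must place all of its mass on $\arg\min_k \dprod{\mat{L}_{k\cdot m}, \mat{\eta}^m(x)}$ for a.e.\ $x$ and every $m$; the set on which this argmin fails to be a singleton is contained in $\bigcup_{m}\bigcup_{i\neq j}\set{x: \dprod{\mat{L}_{i\cdot m}-\mat{L}_{j\cdot m}, \mat{\eta}^m(x)} = 0}$, and for distinct loss rows each such set is the $\mat{\eta}^m$-preimage of a hyperplane in $\Delta^K$, hence of $\bP$-measure zero by the absolute continuity of the law of $\mat{\eta}^m(X)$. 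Consequently every minimizer of $\dprod{\mat{L},\cdot}$ over $\cC$ realizes the same confusion, forcing $\bC(\bh^*) = \bC^*$ and completing the argument. I expect the main obstacle to be exactly this step: justifying the interchange of the pointwise minimum with the integral through a measurable selection, and ruling out ties via the no-atom/absolute-continuity hypothesis, which is what upgrades ``$\bh^*$ minimizes the linear functional'' to ``$\bh^*$ realizes the Bayes optimal confusion $\bC^*$'' rather than merely lying in the same exposed face of $\cC$.
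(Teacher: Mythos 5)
Your proposal is correct, and its step (i) is exactly the paper's entire proof: substitute the definition \eqref{eq:defC} of the confusion tensor into $\dprod{\mat{L},\mat{C}(\bh)}$, observe that the objective decomposes pointwise in $x$ and across outputs $m$, and note that a linear function over the simplex $\Delta^K$ is optimized at a vertex, namely $e_{\arg\min_k \dprod{\mat{L}_{k\cdot m},\vect{\eta}^m(x)}}$. Where you genuinely go beyond the paper is step (ii). The paper's proof stops after the vertex argument, which only shows that the weighted deterministic classifier lies in the same exposed face $\arg\min_{\bC}\dprod{\mat{L},\bC}$ as $\bC^*$; the identification $\bC(\bh^*)=\bC^*$ (equivalently, that the face is a.e.\ a singleton) is delegated to a citation of \citet{narasimhan2015consistent} inside the proof of Lemma~\ref{lem:supporting} rather than argued. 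Your explicit tie-breaking argument --- that the set of instances with a non-singleton argmin is contained in finitely many $\vect{\eta}^m$-preimages of hyperplanes intersected with $\Delta^K$, which are null under the absolute continuity in Assumption~\ref{ass:data} --- supplies precisely this missing step, and is exactly why the assumption appears in the theorem's hypotheses even though the paper's displayed computation never invokes it. One small caveat: your phrase ``for distinct loss rows'' is doing real work. If $\mat{L}_{i\cdot m}-\mat{L}_{j\cdot m}$ is a nonzero multiple of the all-ones vector the tie set is empty (fine), but if two rows coincide the tie set is all of $\mathcal{X}$, the hyperplane-preimage argument fails, and the minimizing face can be genuinely multi-point --- this is the degenerate regime in which the Bayes optimum may require a mixture of two deterministic weighted classifiers, as in the paper's ``All Bayes Classifiers'' corollary. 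The paper makes the same implicit non-degeneracy assumption, so this is a shared blemish rather than a flaw specific to your argument, but it is worth stating the row-distinctness condition as an explicit hypothesis.
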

Theorem~\ref{thm:max_over_eta} states that the correlation between outputs are fully reflected in the conditional probability. It unifies some known results: when $M=1$, this recovers the multiclass optimal result in~\cite{narasimhan2015consistent}; when $K=2$, this recovers the multilabel results studied in~\cite{koyejo2015consistent}; when $M=1,K=2$, one can show that the weighted classifier reduces to standard thresholding for binary classification~\cite{pmlr-v80-yan18b,NIPS2014_5454}.

Interestingly, Theorem~\ref{thrm:Min-Form} combined with Corollary~\ref{lemma:all} suggests a remarkable simplicity of all Bayes optimal classifiers, namely that either there exists a deterministic Bayes classifier, or there exists a Bayes classifier given by a mixture of two deterministic classifiers. This observation is stated more formally in the following corollary. 
\begin{corollary}[All Bayes Classifiers]
	\label{lemma:all}
	Under Assumption~\ref{ass:data}, for any metric $\cU_\Psi(\mathbf{h})$, $\exists$ a Bayes optimal classifier $\bh^*$ which satisfies $\mathbf{h}^* = \alpha\mathbf{h}_1 + (1-\alpha)\mathbf{h}_2$, where $\mathbf{h}_i$ are weighted deterministic classifiers (Theorem \ref{thrm:Min-Form}), and $\mathbf{C}_1 =\mathbf{C}(\mathbf{h}_1)$, $\mathbf{C}_2 = \mathbf{C}(\mathbf{h}_2)$, $\mathbf{C}(\mathbf{h^*}) = \mathbf{C}^* = \alpha\mathbf{C}_1 + (1-\alpha)\mathbf{C}_2$ as defined in Corollary~\ref{lemma:all}.
\end{corollary}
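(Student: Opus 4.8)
The plan is to lift the confusion-space decomposition established in the All Bayes Optima corollary to the space of classifiers, using Theorem~\ref{thrm:Min-Form} together with the linearity of the confusion map. Concretely, I would start from an arbitrary Bayes optimal confusion $\bC^*$. By the All Bayes Optima corollary (which itself rests on convexity of $\cC$, Lemma~\ref{lemma:ConvexCompact}, and the supporting-hyperplane fact, Lemma~\ref{lem:supporting}), there exist loss tensors $\mat{L}_1, \mat{L}_2$, a scalar $\alpha \in [0,1]$, and feasible confusions $\bC_i \in \arg\min_{\bC}\dprod{\mat{L}_i, \bC}$ with $\bC^* = \alpha\bC_1 + (1-\alpha)\bC_2$. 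The key observation is that each $\bC_i$ is the population-optimal confusion of a purely linear metric $\Psi_i(\bC) = -\dprod{\mat{L}_i, \bC}$, and this linear metric trivially admits a weighted Bayes optimal (Definition~\ref{def:Min-Form}) with loss tensor $\mat{L}_i$.

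Second, I would invoke Theorem~\ref{thrm:Min-Form} separately for each linear metric $\Psi_i$. Under Assumption~\ref{ass:data}, this produces a deterministic weighted classifier $\bh_i$ of the form $\bh_i^m(x) = e_{\ind{k = \arg\min_k \dprod{\mat{L}_{i,k\cdot m}, \mat{\eta}^m}}}$ whose confusion is the weighted-optimal confusion, i.e. $\bC(\bh_i) = \bC_i$. Third, I would set $\bh^* = \alpha\bh_1 + (1-\alpha)\bh_2$ and verify it lies in $\cH_r$: at each $x$ and output $m$, $\bh^{*m}(x)$ is a convex combination of two points of $\Delta^K$ and therefore stays in $\Delta^K$. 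Because the confusion map in \eqref{eq:defC} is an integral that is linear in $h^m_k$, this gives $\bC(\bh^*) = \alpha\bC(\bh_1) + (1-\alpha)\bC(\bh_2) = \alpha\bC_1 + (1-\alpha)\bC_2 = \bC^*$. Since $\cU_\Psi(\bh) = \Psi(\bC(\bh))$ depends on $\bh$ only through its confusion, we conclude $\cU_\Psi(\bh^*) = \Psi(\bC^*) = \cU^*_{\Psi}$, so $\bh^*$ is Bayes optimal and has exactly the claimed mixture form.

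The main obstacle is the second step: ensuring that the deterministic classifier $\bh_i$ delivered by Theorem~\ref{thrm:Min-Form} realizes \emph{exactly} the boundary point $\bC_i$ appearing in the decomposition, rather than merely \emph{some} element of $\arg\min_{\bC}\dprod{\mat{L}_i, \bC}$. This is precisely where Assumption~\ref{ass:data} is essential: absolute continuity of the marginal $\vect{\eta}^m(X)$ with respect to Lebesgue measure on $\Delta^K$ forces the tie set $\{x : \arg\min_k \dprod{\mat{L}_{i,k\cdot m}, \mat{\eta}^m(x)} \text{ is not a singleton}\}$ to have probability zero, so the argmin classifier is well-defined up to a null set and its confusion is the \emph{unique} minimizer of the linear objective over $\cC$. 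I would make this uniqueness explicit, ruling out the pathological case in which the minimizing face of $\dprod{\mat{L}_i, \cdot}$ is higher-dimensional; this is exactly the degeneracy that Assumption~\ref{ass:data} (and the analogous regularity conditions in the cited works) is designed to exclude. With that uniqueness in hand, the matching $\bC(\bh_i) = \bC_i$ holds, and the remaining linearity and function-of-confusion arguments are routine.
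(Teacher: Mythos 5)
Your argument follows the paper's own route exactly: the paper disposes of this corollary in one line as ``a straightforward consequence of Corollary [All Bayes Optima], and the observation that by definition, $\mathbf{C}_i$ admit a weighted Bayes optimal,'' which is precisely your decompose--realize--mix scheme (supporting-hyperplane decomposition $\mathbf{C}^* = \alpha\mathbf{C}_1 + (1-\alpha)\mathbf{C}_2$, Theorem~\ref{thrm:Min-Form} applied to the linear metrics $-\dprod{\mat{L}_i,\mat{C}}$ to obtain deterministic $\mathbf{h}_i$, then linearity of the confusion map and the fact that $\cU_\Psi$ depends on $\mathbf{h}$ only through $\mat{C}(\mathbf{h})$). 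Your final paragraph, making explicit that Assumption~\ref{ass:data} is needed so the argmin classifier's confusion is the \emph{exact} boundary point $\mathbf{C}_i$ (uniqueness of the linear minimizer, modulo measure-zero ties), supplies a step the paper's one-line proof silently assumes, so your write-up is if anything more careful than the published one.
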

The theorem is a straightforward consequence of Corollary~\ref{lemma:all}, and the observation that by definition, $\mathbf{C}_i$ admit a weighted Bayes optimal.
Notably the same mixture weight $\alpha$ is optimal point-wise. The special case of deterministic classifiers (Theorem~\ref{thrm:Min-Form}) is recovered when $\alpha=0$.

\subsection{Averaged Multioutput Metrics and their Bayes Optima}
\label{sec:averaging}
The most common technique for constructing multioutput metrics is by treating each output as separate multiclass problem and averaging the corresponding multiclass performance metrics. Here we distinguish two types of averaging and discuss differences in their population behavior. For both averaging methods, we assume there exists a multiclass performance metric $\psi: [0,1]^{K\times K}\to \bR$, and an output-specific weight vector $\mathbb{\alpha} \in \bR^M$.

\paragraph{Microaveraging}
Micro-averaging is implemented by averaging the multiclass confusion matrices for each output, then applying the performance function on the averaged confusion matrix. Formally,

\bas{
\Psi_{\text{micro}}(\mat{ \hat{C} }) = \psi \left( \sum_{m=1}^M \mathbb{\alpha}_m\hat{\bC}_{m \cdot \cdot} \right)
=  \psi \left(  \frac{\mathbb{\alpha}_m}{N}\sum_{m=1}^{M}\sum_{n=1}^{N}\sConf_{m,i,j}^{(n)}(h_m) \right)
}

\paragraph{Macroaveraging}
Macro-averaging is implemented by first applying the performance function $\psi$ to each output confusion matrix, then averaging over the outputs. Formally,
\bas{
\Psi_{\text{macro}}(\mat{ \hat{C} }) = \sum_{m=1}^M \mathbb{\alpha}_m\psi \left( \hat{\bC}_{m \cdot \cdot} \right)
= \sum_{m=1}^M \mathbb{\alpha}_m  \psi \left( \frac{1}{N}\sum_{n=1}^{N}\sConf_{m,i,j}^{(n)}(h_m)  \right).
}
Examples of macro-averaged metrics include Decathlon score used by~\cite{rebuffi2017learning}, where $\psi(C_{m\cdot\cdot}) = 1-\diag(C_{m\cdot\cdot})$. Micro-averaging and Macro-averaging result in identical performance metrics for the case of linear functions $\psi$, such as those used in weighted average accuracy. Yet it will not be surprising these two types of averaging methodology result in different optimal classifiers for general non-linear functions $\psi$.
Examples include linear metrics like Ordinal~\citep{narasimhan2015consistent}, Micro-$F_1$~\citep{kim2013genia} and Macro-$F_1$~\citep{lewis1991evaluating}; weighted accuracy puts an exponential weights to each class; and polynomial functions are used in metrics such as Decathlon score~\citep{rebuffi2017learning}. Table~\ref{table-metrics} lists several examples of $\psi$ commonly used by practitioners in multiclass and multioutput classification.

\begin{table}[h]
\caption{Examples of multiclass performance metrics}
\label{table-metrics}
    \begin{center}
    \begin{tabular}{ccccc}
    \toprule
    {Performance Metrics} & $\psi(\mat{\conf})$ \\
    \midrule
    Ordinal & $\sum_{i=1}^{K}\sum_{j=1}^{K}(1-\frac{1}{n-1} \abs{i-j})\conf_{i,j}$\\[5pt]
    Micro-$F_1$ & $\frac{2\sum_{i=2}^{K}\conf_{i,i}}  {2-\sum_{i=1}^{K}\conf_{1,i}-\sum_{i=1}^{K}\conf_{i,1}}$ \\[10pt]
    Macro-$F_1$ & $\frac{1}{K} \sum_{i=1}^K  \frac{2 \conf_{i,i}} {\sum_{j=1}^{K}\conf_{i,j}+\sum_{j=1}^{K}\conf_{j,i}}$ \\[10pt]
    Weighted$_{\gamma}$ & $\sum_{i=1}^{K} e^{-\gamma i} \conf_{ii}$\\[7pt]
    Min-max & $\min_{i\in\left[K\right]} \frac{C_{i,i}}{\sum^{n}_{j=1}C_{i,j}}$\\
    Polynomial & $ (1-\diag(\mat{C}))^{\gamma}$\\
    \bottomrule
    \end{tabular}
    \end{center}
\end{table}

The following propositions state some differences between the two kinds of averaging.
\begin{proposition}[Shared Loss for micro-averaged metrics]
\label{prop:micro_share_plane}
For micro-averaged multioutput metrics, all outputs (considered as multiclass problems) share the same loss matrix $ \mat{L}^*_{m\cdot \cdot} = \mat{S},\, \forall \, m \in [M]$.
\end{proposition}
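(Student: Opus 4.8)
The plan is to exploit the compositional structure of the micro-averaged metric, namely that $\Psi_{\text{micro}}(\mat{C}) = \psi(\mat{M})$ where $\mat{M} = \sum_{m=1}^M \alpha_m \mat{C}_{m\cdot\cdot}$ is a \emph{single} $K\times K$ matrix obtained by a linear aggregation across outputs. Because the aggregation map $\mat{C}\mapsto\mat{M}$ is linear, the supporting loss tensor guaranteed by Lemma~\ref{lem:supporting} (or by the gradient characterization of Lemma~\ref{lem:gradient_support}) must itself factor through this map, and I will show that its per-output slices coincide up to the positive weights $\alpha_m$, which turn out to be irrelevant to the induced classifier.

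I would handle the differentiable case first. Applying the chain rule to $\Psi_{\text{micro}} = \psi\circ(\mat{C}\mapsto\mat{M})$ gives, for every $(m,k,\ell)$, $\partial\Psi_{\text{micro}}/\partial\mat{C}_{m,k,\ell} = \alpha_m\,(\grad\psi(\mat{M}))_{k,\ell}$, so the $m$th slice of the gradient is $(\grad\Psi_{\text{micro}}(\mat{C}^*))_{m\cdot\cdot} = \alpha_m\,\grad\psi(\mat{M}^*)$, where $\mat{M}^*$ is the aggregated optimal confusion. By Lemma~\ref{lem:gradient_support} the loss tensor may be taken as $\mat{L}^* = -\grad\Psi_{\text{micro}}(\mat{C}^*)$ (up to the additive/scale conventions noted after that lemma), hence $\mat{L}^*_{m\cdot\cdot} = \alpha_m\,\mat{S}$ with the common matrix $\mat{S} = -\grad\psi(\mat{M}^*)$, identical across $m$ except for the scalar $\alpha_m$.

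For the general (possibly non-differentiable) case I would argue through supporting hyperplanes directly. The set of feasible aggregated confusions $\set{\sum_m \alpha_m \mat{C}_{m\cdot\cdot} : \mat{C}\in\cC}$ is the image of the convex compact set $\cC$ (Lemma~\ref{lemma:ConvexCompact}) under a linear map, hence convex and compact, and $\mat{M}^*$ is its $\psi$-maximizer; when it lies on the boundary (as whenever a weighted Bayes optimal exists) a supporting matrix $\mat{S}$ satisfies $\mat{M}^* \in \argmin \dprod{\mat{S},\mat{M}}$ over this set. Using bilinearity, $\dprod{\mat{S}, \sum_m \alpha_m \mat{C}_{m\cdot\cdot}} = \dprod{\mat{L}, \mat{C}}$ with $\mat{L}_{m\cdot\cdot} = \alpha_m\mat{S}$, so $\mat{C}^* \in \argmin_{\mat{C}}\dprod{\mat{L},\mat{C}}$ and again every slice is a positive multiple of the single matrix $\mat{S}$.

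Finally I would remove the weights. The weighted Bayes optimal classifier of Theorem~\ref{thrm:Min-Form} decides each output $m$ by an $\argmin$ over classes that involves only the slice $\mat{L}^*_{m\cdot\cdot}$ and $\mat{\eta}^m$; since $\alpha_m>0$, scaling that slice by $1/\alpha_m$ leaves the $\argmin$ (hence the classifier, hence its optimality) unchanged, so the loss tensor can be normalized so that $\mat{L}^*_{m\cdot\cdot} = \mat{S}$ for all $m$, proving the claim. The main obstacle is precisely this last step: it requires a per-output scaling equivalence on loss tensors that is strictly stronger than the global scale invariance already noted in the text, and its validity hinges on the decision rule being separable across outputs — which is exactly what the shared-argument structure of micro-averaging provides. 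A minor technical point is confirming positivity of the averaging weights $\alpha_m$ so that the rescaling preserves the orientation of the $\argmin$.
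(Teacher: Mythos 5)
Your proof is correct, and its core---the chain rule applied to the composition $\Psi_{\text{micro}} = \psi \circ \left(\mat{C} \mapsto \sum_{m} \alpha_m \mat{C}_{m\cdot\cdot}\right)$ followed by the gradient-support characterization of Lemma~\ref{lem:gradient_support}---is exactly the paper's argument. The differences lie in generality. The paper's proof silently specializes to uniform weights $\alpha_m = 1/M$: it computes $\nabla \Psi(\mat{C}) = \nabla\psi\left(\frac{1}{M}\sum_{m}\mat{C}_{m\cdot\cdot}\right) \otimes \frac{1}{M}\bfone_M$, under which all mode-1 slices are \emph{literally} identical and the proposition follows with no further work. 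You instead keep general $\alpha_m$, obtain slices $\alpha_m\mat{S}$, and then need the extra normalization step to remove the scalars; that step is the only delicate point, and your justification is sound. Since $\mat{C}_{m\cdot\cdot}$ depends only on $h_m$ and the components $h_m$ may be chosen independently, the minimization $\min_{\mat{C}}\dprod{\mat{L},\mat{C}}$ decomposes output-by-output (equivalently, the weighted rule of Theorem~\ref{thrm:Min-Form} decides each output by a separate $\argmin$), so rescaling the $m$th slice by $1/\alpha_m > 0$ preserves every per-output minimizer and hence the optimal confusion. You are also right that positivity of $\alpha_m$ is genuinely needed here and is not merely cosmetic: with $\alpha_m = 0$ the slice degenerates and with $\alpha_m < 0$ the orientation of the $\argmin$ flips, whereas the paper avoids the issue entirely by its uniform-weight convention. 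Finally, your supporting-hyperplane argument for non-differentiable $\psi$ is a genuine extension of the paper's proof, which rests on Lemma~\ref{lem:gradient_support} and therefore assumes differentiability; your version pushes $\cC$ forward through the linear aggregation map (convexity and compactness of the image following from Lemma~\ref{lemma:ConvexCompact}), supports the image at $\mat{M}^*$, and pulls the supporting matrix back through the adjoint, which is precisely what produces the shared slice structure. In short: the same computation at heart, but your route additionally covers non-uniform weights and non-smooth $\psi$, at the cost of the (correctly identified) positivity hypothesis.
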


Now, for macro-averaged multioutput metrics, the following proposition shows that the Bayes optimal classifier decomposes across the labels.
\begin{proposition}[Decomposability of macro-averaged Bayes optimal]
\label{macro-decompose}
  The macro-averaged utility decomposes as $\mathcal{U}_{\text{macro}}(\vect{h}) = \sum_{m\in [M]} \mathcal{U}_m(h_m)$. Without additional classifier restrictions, the Bayes optimal macro-averaged classifier decomposes as:
  $h_m^*(x) = \underset{h_m}{\argmax} \text{ } \mathcal{U}_m(h_m)$.
\end{proposition}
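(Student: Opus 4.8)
The plan is to exploit the fact that macro-averaging produces an objective that is additively separable across outputs, and that the randomized classifier family carries no coupling between the per-output classifiers, so that the joint maximization splits into $M$ independent multiclass problems. I would first establish the additive decomposition of the utility. From the definition of the population confusion tensor, $\mat{C}_{m,k,\ell} = \int_x h^m_k(x)\,\eta^m_\ell(x)\,d\bP(x)$, the $(m,k,\ell)$ entry involves only the $m$th output classifier $h_m$ and the $m$th marginal $\eta^m$; no other output's classifier enters. Consequently the slice $\bC_{m\cdot\cdot}(\bh)$ is a function of $h_m$ alone, which I write as $\bC_{m\cdot\cdot}(h_m)$. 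Substituting into the definition of $\Psi_{\text{macro}}$ and setting $\cU_m(h_m) := \alpha_m\,\psi\!\left(\bC_{m\cdot\cdot}(h_m)\right)$ gives
\[
\cU_{\text{macro}}(\bh) = \Psi_{\text{macro}}(\bC(\bh)) = \sum_{m\in[M]}\alpha_m\,\psi\!\left(\bC_{m\cdot\cdot}(h_m)\right) = \sum_{m\in[M]}\cU_m(h_m),
\]
which is the claimed separability.

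Next I would carry out the optimization. The family $\cH_r = \set{\bh:\mathcal{X}\to(\Delta^K)^M}$ is precisely the product over outputs of the single-output families $\set{h_m:\mathcal{X}\to\Delta^K}$, so specifying $\bh$ is equivalent to specifying the tuple $(h_1,\dots,h_M)$ with each $h_m$ chosen freely and independently of the rest. Because the $m$th summand $\cU_m(h_m)$ depends only on $h_m$, the supremum over $\bh$ separates coordinatewise, $\sup_{\bh}\sum_{m}\cU_m(h_m) = \sum_{m}\sup_{h_m}\cU_m(h_m)$, and any tuple of per-output maximizers $h_m^*\in\argmax_{h_m}\cU_m(h_m)$ assembles into a global maximizer $\bh^*$. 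Existence of each $h_m^*$ follows from the compactness of the single-output feasible confusion set (the $M=1$ specialization of Lemma~\ref{lemma:ConvexCompact}) together with continuity of $\psi$.

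The only delicate point --- and the one the phrase ``without additional classifier restrictions'' is guarding --- is the structural claim that $\bC_{m\cdot\cdot}$ depends on $h_m$ alone and that $\cH_r$ is a genuine product over outputs. This is immediate from the confusion-tensor definition and the form of $\cH_r$ here, but it is exactly what would break if the outputs were tied together by a shared representation or a joint budget constraint: the product structure of the feasible set would be lost and the supremum would no longer separate. Thus the proof reduces to verifying the per-output dependence of the confusion slice and invoking the elementary fact that maximizing a separable sum over a product domain decouples coordinatewise.
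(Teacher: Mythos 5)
Your proof is correct and takes essentially the same route as the paper's: both derive the additive decomposition directly from the definition of the population confusion, observing that each slice $\mat{C}_{m\cdot\cdot}$ depends only on $h_m$, so the macro-averaged utility is a sum of per-output utilities and the maximization decouples. You are somewhat more explicit than the paper --- which simply states that the proof ``follows by definition'' --- in spelling out the product structure of $\mathcal{H}_r$ that justifies exchanging the supremum with the sum, and the compactness/continuity argument for existence of each per-output maximizer, both of which the paper leaves implicit.
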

Proposition~\ref{macro-decompose} states that if there are no constraints on the classifier, then optimizing the macro-averaged metric is equivalent to optimizing each task/output separately. We observe that in real-world multitask learning problems, one often imposes structural assumptions that correlate the outputs to boost the performance.
This highlights an interesting gap between finite sample and population analysis which we leave for future work.

\subsection{Fractional-Linear Multiclass Metrics}

Fractional-Linear metrics are a popular family of classification metrics which include the F-measure.
\begin{definition}[Fractional-Linear Metric Functions]\label{ratio-of-linear}
  $\psi$ is a fractional-linear function of the confusion matrix if it can be expressed as $\psi(\mat{C}) = \frac{\dprod{\mat{A}, \mat{C}}}{\dprod{\mat{B}, \mat{C}}}$, where $\mat{A}, \mat{B}\in\mathbb{R}^{K\times K}$ and $\dprod{\mat{B}, \mat{C}} > 0$ for $\forall\mat{C}\in\mathcal{C}$.
\end{definition}

For ease of exposition, we introduce the following definition.
\begin{definition}[Loss-based multiclass metric]\label{loss-based}
  Let $\mat{\loss}\in\range{0, 1}^{\classNum\times\classNum}$ be a loss matrix, its corresponding loss-based metric is defined as $\psi^{\mat{L}}(\bh) = 1 - \dprod{\mat{L}, \mat{\conf}(\bh)}$.
\end{definition}

We can derive two straightforward corollaries for fractional-linear metrics.

\begin{corollary}[Bayes optimal for micro-averaged fractional-linear metrics]
  \label{l-optimal-1}
  Let $\psi$ be fractional-linear, then the micro-averaged multioutput Bayes optimal classifier is a weighted classifier for each output i.e. $h_m(x)\in\argmin_{k\in\range{K}}\mat{L}_k^{*\intercal}\vect{\marg}_m(x)$.
  Let $\widetilde{\mat{L}}^* = \mathcal{U}^*_{\Psi}\mat{B} - \mat{A}$, and $\mat{L}^*$ be the $\range{0, 1}^{K\times K}$ matrix obtained by scaling and shifting $\widetilde{\mat{L}}^*$, then any classifier that is $\psi^{\mat{L}^*}$-optimal is also $\Psi$-optimal.
\end{corollary}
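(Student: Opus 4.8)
The plan is to exploit the classical linearization trick for fractional-linear objectives and then combine it with the structural results already established. First I would observe that the micro-averaged utility depends on $\bh$ only through the single $K\times K$ averaged matrix $\bar{\bC} := \sum_{m} \alpha_m \bC_{m\cdot\cdot}$, so that $\cU_\Psi(\bh) = \frac{\dprod{\mat{A}, \bar{\bC}}}{\dprod{\mat{B}, \bar{\bC}}}$ with $\dprod{\mat{B}, \bar{\bC}} > 0$ on $\mathcal{C}$. Writing $\bar{\bC}^*$ for the averaged confusion of a Bayes optimal $\bh^*_\Psi$, the definition of $\cU^*_\Psi$ gives $\frac{\dprod{\mat{A}, \bar{\bC}}}{\dprod{\mat{B}, \bar{\bC}}} \le \cU^*_\Psi$ for every feasible confusion, with equality at $\bar{\bC}^*$. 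Multiplying through by the strictly positive denominator turns this into the linear inequality $\dprod{\cU^*_\Psi \mat{B} - \mat{A}, \bar{\bC}} \ge 0$, with equality at $\bar{\bC}^*$. Hence, setting $\widetilde{\mat{L}}^* = \cU^*_\Psi \mat{B} - \mat{A}$, the averaged optimal confusion satisfies $\bar{\bC}^* \in \argmin_{\bar{\bC}} \dprod{\widetilde{\mat{L}}^*, \bar{\bC}}$ and, crucially, the attained minimum value is exactly $0$.

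Next I would translate this linear-optimality condition back to the classifier. Because the micro-averaged loss tensor shares a single slice across outputs (Proposition~\ref{prop:micro_share_plane}), the weighted problem $\min_{\bC} \dprod{\widetilde{\mat{L}}^*, \bar{\bC}}$ is exactly a weighted loss with loss tensor $\mat{L}^*_{m\cdot\cdot} = \widetilde{\mat{L}}^*$ up to rescaling, so Lemma~\ref{lem:supporting} together with Theorem~\ref{thm:max_over_eta} yields the pointwise weighted form $h_m(x) \in \argmin_{k\in[K]} \mat{L}_k^{*\intercal} \vect{\eta}_m(x)$. To place the loss in $[0,1]^{K\times K}$ as claimed, I would note that $\argmin_{\bC}\dprod{\mat{L}, \bC}$ is invariant under replacing $\mat{L}$ by $a\mat{L} + b\,\mathbf{1}\mathbf{1}^\top$ for any $a > 0$ and $b \in \reals$: positive scaling preserves the minimizer, and because every single-output confusion matrix carries unit total mass, the additive constant only shifts the objective by a fixed amount. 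Choosing $a,b$ appropriately rescales $\widetilde{\mat{L}}^*$ into the unit box to obtain $\mat{L}^*$ without perturbing the optimal classifier.

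Finally I would close the converse direction, which is what makes the statement an equivalence: \emph{any} $\psi^{\mat{L}^*}$-optimal classifier is $\Psi$-optimal. By Definition~\ref{loss-based}, a $\psi^{\mat{L}^*}$-optimal classifier maximizes $1 - \dprod{\mat{L}^*, \bar{\bC}}$, i.e.\ attains the minimum of $\dprod{\mat{L}^*, \bar{\bC}}$ over $\mathcal{C}$; undoing the scale and shift, its averaged confusion therefore satisfies $\dprod{\widetilde{\mat{L}}^*, \bar{\bC}} = 0$. But $\dprod{\widetilde{\mat{L}}^*, \bar{\bC}} = 0$ is precisely $\cU^*_\Psi \dprod{\mat{B}, \bar{\bC}} = \dprod{\mat{A}, \bar{\bC}}$, which upon dividing by $\dprod{\mat{B}, \bar{\bC}} > 0$ gives $\psi(\bar{\bC}) = \cU^*_\Psi$, so the classifier is $\Psi$-optimal. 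I expect this last step to be the main obstacle: the argument works only because the minimum of the linearized loss is \emph{exactly} zero, so that every minimizer, not merely the particular $\bar{\bC}^*$, recovers the optimal fractional-linear value. The fact that $\cU^*_\Psi$ is itself the attained ratio is what forces the zero minimum, and some care is needed to check that the scaling/shifting reduction preserves this zero level set.
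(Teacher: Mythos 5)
Your proof is correct and follows essentially the same route the paper intends: the corollary is stated there as a ``straightforward'' consequence, and the intended argument is exactly your linearization device $\widetilde{\mat{L}}^* = \mathcal{U}^*_{\Psi}\mat{B} - \mat{A}$ (the same reduction of $\max_{\mat{C}}\Psi(\mat{C})\geq\gamma$ to $\min_{\mat{C}}-\dprod{\mat{A}-\gamma\mat{B},\mat{C}}$ that underlies the paper's bisection algorithm), combined with Proposition~\ref{prop:micro_share_plane} for the shared loss slice and Theorem~\ref{thrm:Min-Form} for the pointwise weighted form. Your explicit observation that the linearized minimum equals exactly zero---which is what makes \emph{every} $\psi^{\mat{L}^*}$-optimal classifier $\Psi$-optimal rather than just the particular $\bar{\mat{C}}^*$, and survives the scale/shift normalization because the averaged confusion carries fixed total mass---is precisely the step the paper leaves implicit.
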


\begin{corollary}[Bayes optimal for macro-averaged fractional-linear metrics]
  \label{l-optimal-2}
  Let $\psi$ be fractional-linear, then the macro-averaged multioutput Bayes optimal classifier is a weighted classifier. Let $\widetilde{\mat{L}}_m^* = \mathcal{U}^*_{\Psi, m}\mat{B} - \mat{A}$, and $\mat{L}_m^*$ be the $\range{0, 1}^{K\times K}$ matrix obtained by scaling and shifting $\widetilde{\mat{L}}_m^*$, then any classifier that is $\psi^{\mat{L}_m^*}$-optimal is also $\Psi$-optimal for each $m \in [M]$.
\end{corollary}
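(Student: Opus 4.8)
The plan is to reduce the macro-averaged problem to $M$ independent single-output multiclass problems via Proposition~\ref{macro-decompose}, and then to dispatch each single output with the optimal-value linearization that already underlies Corollary~\ref{l-optimal-1}. First I would apply Proposition~\ref{macro-decompose} to write $\cU_{\text{macro}}(\bh) = \sum_{m\in[M]}\cU_m(h_m)$ with $\cU_m(h_m) = \alpha_m\,\psi(\bC_{m\cdot\cdot}(h_m))$, so that a Bayes optimal classifier decomposes as $h_m^*(x) = \argmax_{h_m}\cU_m(h_m)$. Since $\alpha_m > 0$, the factor $\alpha_m$ is immaterial to the maximizer, and each output reduces to maximizing the multiclass fractional-linear metric $\psi(\bC_{m\cdot\cdot})$ on its own feasible set, which is exactly the multiclass setting treated in Corollary~\ref{l-optimal-1}.

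Second, for each fixed $m$ I would invoke the linearization for fractional-linear metrics. Writing $\cU^*_{\Psi,m} = \max_{h_m}\psi(\bC_{m\cdot\cdot}(h_m))$ for the optimal value of $\psi$ (not $\alpha_m\psi$) on output $m$ and letting $\bC_m^*$ attain it, the definition $\psi(\bC) = \dprod{\mat{A},\bC}/\dprod{\mat{B},\bC}$ together with $\dprod{\mat{B},\bC} > 0$ (Definition~\ref{ratio-of-linear}) yields two facts: optimality forces $\dprod{\mat{A},\bC_m^*} = \cU^*_{\Psi,m}\dprod{\mat{B},\bC_m^*}$, i.e. $\dprod{\widetilde{\mat{L}}_m^*,\bC_m^*} = 0$ for $\widetilde{\mat{L}}_m^* = \cU^*_{\Psi,m}\mat{B} - \mat{A}$; and for any feasible $\bC$, clearing the positive denominator in $\psi(\bC)\le\cU^*_{\Psi,m}$ gives $\dprod{\widetilde{\mat{L}}_m^*,\bC}\ge 0$. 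Hence $\bC_m^* \in \argmin_{\bC}\dprod{\widetilde{\mat{L}}_m^*,\bC}$, so output $m$ admits a weighted Bayes optimal in the sense of Definition~\ref{def:Min-Form}, and Theorem~\ref{thrm:Min-Form} produces the weighted form $h_m^*(x)\in\argmin_{k\in\range{K}}(\mat{L}_{m,k}^*)^{\intercal}\vect{\marg}_m(x)$.

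Third, I would verify that replacing $\widetilde{\mat{L}}_m^*$ by the rescaled loss $\mat{L}_m^*\in\range{0,1}^{K\times K}$ does not move the minimizer. Any affine reparametrization $\mat{L}_m^* = s\,\widetilde{\mat{L}}_m^* + t\,\bfone$ with $s>0$ satisfies $\dprod{\mat{L}_m^*,\bC} = s\dprod{\widetilde{\mat{L}}_m^*,\bC} + t\sum_{i,j}\conf_{i,j}$, and since every single-output confusion matrix has total mass $\sum_{i,j}\conf_{i,j}=1$, the additive term is a constant shared by all feasible $\bC$ while $s>0$ preserves the sense of the minimization; thus $\argmin_{\bC}\dprod{\mat{L}_m^*,\bC} = \argmin_{\bC}\dprod{\widetilde{\mat{L}}_m^*,\bC}$. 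Consequently $\psi^{\mat{L}_m^*}$-optimality (Definition~\ref{loss-based}) coincides with $\Psi$-optimality for output $m$, and reassembling the outputs via Proposition~\ref{macro-decompose} gives the corollary; the contrast with the shared loss of the micro-averaged case is precisely that each output now carries its own optimal value $\cU^*_{\Psi,m}$ and hence its own $\widetilde{\mat{L}}_m^*$.

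The substantive step is the second one, the fractional-linear linearization with its sign bookkeeping, but this is essentially inherited from the multiclass analysis behind Corollary~\ref{l-optimal-1}, so the genuinely new content is light. The only point requiring care is the scale-and-shift invariance in the third step, where one must use $s>0$ together with the fixed total mass $\sum_{i,j}\conf_{i,j}=1$ of each per-output confusion matrix to ensure that mapping $\widetilde{\mat{L}}_m^*$ into $\range{0,1}^{K\times K}$ leaves the Bayes optimal confusion, and hence the weighted classifier, unchanged.
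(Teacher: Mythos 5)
Your proposal is correct and follows exactly the route the paper intends: the paper leaves this corollary without an explicit proof (calling it ``straightforward''), and its implicit argument is precisely your combination of Proposition~\ref{macro-decompose} for the per-output decomposition, the level-set linearization $\dprod{\widetilde{\mat{L}}_m^*,\mat{C}}\ge 0$ with equality at the optimum (the same equivalence $\max_{\mat{C}}\Psi(\mat{C})\ge\gamma \Leftrightarrow \min_{\mat{C}}\dprod{\gamma\mat{B}-\mat{A},\mat{C}}\le 0$ that drives the bisection algorithm in Section~\ref{section:bisection}), and Theorem~\ref{thrm:Min-Form} for the weighted classifier form. Your third step, justifying the scale-and-shift to $\range{0,1}^{K\times K}$ via $s>0$ and the unit total mass $\sum_{i,j}\conf_{i,j}=1$ of each per-output confusion matrix, is a correct and welcome piece of bookkeeping that the paper only asserts implicitly.
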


\section{A Bisection Algorithm For Linear-Fractional Metrics}
\label{algorithm}

The Bayes optimality analysis suggests two strategies for estimating consistent classifiers. Both approaches require an estimate of the loss matrix (or tensor). The first takes trains a weighed classifier as the empirical risk minimizer (ERM) of a weighted loss function e.g. weighted multiclass support vector machine~\cite{yang2007weighted}. The second (plug-in) approach first computes an estimate of the conditional probability $\widehat{\vect{\eta}}^m(X)$ then returns a decision rule based on the weighted optimal of Definition~\ref{def:Min-Form}. We focus on the plug-in estimator for this manuscript, and leave details of the ERM estimator for a longer version of this manuscript. We refer the interested reader to \citep{koyejo2015consistent} for additional discussion of the two approaches to consistent estimators for the special case of binary classification.

For the plug-in estimator, observe that the construction of the decision rule is analogous to post-processing the estimated conditional probabilities in the context of the performance metric of interest. This is in contrast to the default rule which simply predicts the most likely class i.e. $\argmax_{k \in [K]} \widehat{\eta_k}^m(X)$.  Once this loss matrix/tensor is determined, the additional computation required for post-processing is $\mathcal{O}(MK^2)$ for the matrix-vector multiplication, which is further reduced to $\mathcal{O}(MK)$ for the common setting of diagonal (class-specific) weights. This additional computation is negligible for small and medium problem sizes.

\vspace{-0.2cm}
\subsection{Bisection Method for Fractional-Linear Metrics}
\label{section:bisection}

We begin by splitting the training dataset $\mathcal{S}$ into $\mathcal{S}_1$ for $\widehat{\vect{\eta}}^{\mathcal{S}_1}$ probability estimation $\mathcal{S}_2$ for obtaining confusion $\mat{\sConf}^{\mathcal{S}_2}$ to be evaluated. For fractional-linear performance metrics $\Psi(\mat{\conf})=\frac{\dprod{\mat{A}, \mat{C}}}{\dprod{\mat{B}, \mat{C}}}$, the maximization of $\gamma$ satisfying $\max_{\mat{\conf}}\Psi (\mat{\conf}) \geq \gamma$ is equivalent as linear minimization of $\min_{\mat{\conf}}- \dprod{\mat{A}-\gamma \mat{B}, \mat{C}}$. During each iteration $t$, we apply bisection method to find a midpoint $\gamma^{t}$  between the lower bound lower bound $\alpha^t$ and the upper bound $\beta^t$. The loss matrix can be computed as $\widehat{\mat{L}}^t = \gamma^t\mat{B} - \mat{A}$. The performance of the linear form classifier is then computed on $\mathcal{S}_2$, and the lower and upper bounds are updated accordingly. Our proposed approach builds on the \citep{narasimhan2015consistent}, originally proposed for multi-class classification. The flowchart of the Bisection algorithm is described in Figure~\ref{fig:algo-graph} with a detailed discussion in Algorithm~\ref{algo:bisection} in the Appendix. Theorem~\ref{consistency-theorem} in the Appendix shows that the Bisection search plug-in classifier is consistent if $\vect{\eta}$ obtained by $\mathcal{S}_1$ satisfies $\expectx{X}{\norm{\widehat{\vect{\eta}}^m(X) - {\vect{\eta}^m(X)}}_{1}} \to 0 \text{ } \forall m$ when $N\to \infty$.

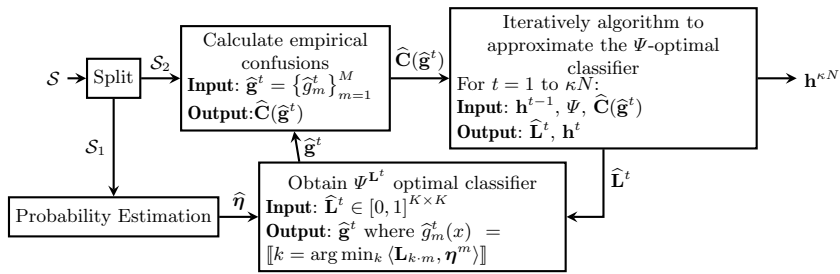
\begin{figure*}[ht]
\begin{center}
\begin{tikzpicture}[thick,scale=0.8, every node/.style={transform shape}]
\node[state](split) {Split};
\node[rectangle, left of = split](input) {$\mathcal{S}$};
\node[state, below = 5em of split](estimate) {Probability Estimation};
\node[state, right = 2em of split, text width = 10em, align=left](confusion){\begin{minipage}{\textwidth}\center Calculate empirical confusions\end{minipage}\\\textbf{Input}: $\vect{\widehat{g}}^t = \set{\widehat{g}^t_m}_{m=1}^M$\\\textbf{Output}:$\mat{\widehat{C}}(\vect{\widehat{g}}^t)$};
\node[state, right = 2em of estimate, text width = 15em, align=left](classifier){\begin{minipage}{\textwidth}\center Obtain $\Psi^{\mat{L}^t}$ optimal classifier\end{minipage}\\\textbf{Input}: $\mat{\widehat{L}}^t\in\range{0, 1}^{K\times K}$\\\textbf{Output}: $\vect{\widehat{g}}^t$ where
${\widehat{g}^t_m(x)}  = \ind{ k=\arg\min_k \dprod{\mat{L}_{k\cdot m}, \mat{\eta}^m} }$};
\node[state, right = 3em of confusion, align=left, text width = 15em](iterate){\begin{minipage}{\textwidth}\center Iteratively algorithm to approximate the $\Psi$-optimal classifier\end{minipage}\\For $t = 1$ to $\kappa N$:\\\textbf{Input}: $\vect{h}^{t-1}$, $\Psi$, $\mat{\widehat{C}}(\vect{\widehat{g}}^t)$\\\textbf{Output}: $\mat{\widehat{L}}^t$, $\vect{h}^t$};
\node[rectangle, right = 2em of iterate](output){$\vect{h}^{\kappa N}$};

\draw [arrow] (input)--(split);
\draw [arrow] (split)-- node [left] {$\mathcal{S}_1$} (estimate);
\draw [arrow] (split)-- node [above] {$\mathcal{S}_2$} (confusion);
\draw [arrow] (estimate) -- node [above] {$\vect{\widehat{\eta}}$} (classifier);
\draw [arrow] (confusion) -- node [above] {$\mat{\widehat{C}}(\vect{\widehat{g}}^t)$} (iterate);
\draw [arrow] (iterate) |- node [right, yshift=2em] {$\mat{\widehat{L}}^t$} (classifier);
\draw [arrow] (classifier.154) -- node [right] {$\vect{\widehat{g}}^t$} (confusion);
\draw [arrow] (iterate) -- (output);
\end{tikzpicture}
\end{center}
\caption{Flow of the multi-output algorithm framework. Note that probability estimation part can be done using any algorithm that estimates scores which approximate the marginal probability $\vect{\widehat{\eta}}(x)\in\range{0, 1}^{M\times K\times K}$ for each sample.}\label{fig:algo-graph}
\end{figure*}


\section{Experiments}
\label{experiment}
We present three different kinds of experimental results. The first are experiments on  synthetic data used to illustrate when the weighted classifier will outperform the default rule, the second set of experiments are on benchmark UCI datasets, and the third is a movie rating prediction task. Note that the proposed procedure can be used to post-process any classifier that estimates probability calibrated scores. To simplify notation, we use the prefix ``C'' for ``consistent'' to denote the post-processed results e.g. C-LogReg to denote consistent (post-processed) logistic regression.

\vspace{-0.2cm}
\subsection{Synthetic Data: Exploring the Advantages of Weighted Classifier}

Our first experiment compares the standard multi-class logistic regression algorithm (LogReg) to the consistent multi-class logistic regression classifier (C-LogReg). Our primary goal in this experiment is to explore and analyze the factors that influence when post-processing will improve performance. We generate 100,000 samples with 10-dimensional features $x$ and 10 classes by standard Gaussian and multinomial distribution. The class probability $\mat{\eta}_k$ for class $k$ is modeled by multinomial logistic regression: $\mat{\eta}_k(x) = \distr(Y_m = k|x) \propto \exp (-w_k^Tx)$.

All experiments use the weighted loss $\Psi^{\text{Weighted}}(\mat{C})=\dprod{\mat{A},\mat{C}}$, where the loss matrix is given by $\mat{L}^{\text{Weighted}} = 1 - \grad  {\Psi(\mat{C})} = 1 - \mat{A}$. Similarly, 0-1 loss takes the form $\mat{L}^{0-1} = 1 - \mat{I}$, where $\mat{I}$ is identity matrix. Our results are presented in the form of a performance ratio
$$\text{PR}_{\text{LogReg}} = \frac {\text{performance of C-LogReg}} {\text{performance of LogReg}}$$ for the following two conditions: (1)
we vary the data generating distribution as $\mat{\eta}_{k}$ by defining $w_{kd} = C_1 \abs{k-d}$, where $C_1$ is the variable, resulting is more or less uniform conditional probabilities; (2) we vary the weight metric as $A_{ii} = e^{-C_2 i}$, where $C_2$ is the variable.

\begin{figure}[ht]
\begin{center}
\centerline{\includegraphics[width=.8\columnwidth]{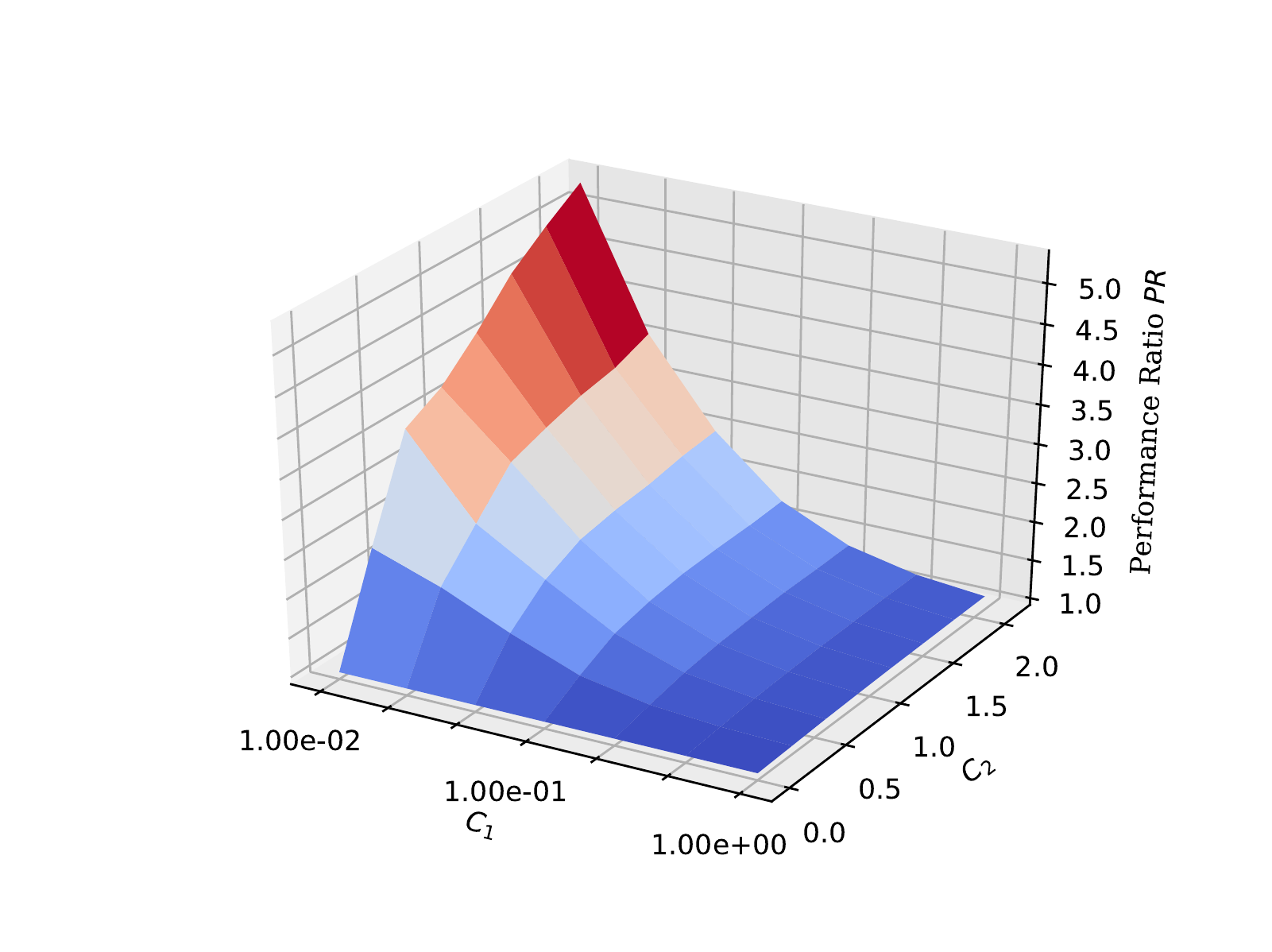}}
\caption{Performance ratio $PR_{\text{LogReg}}$ of synthetic data by $C_1$ and $C_2$ under Weighted performance metric. The largest performance ratio is for low-skew conditional probabilities (close to uniform) and a high-skew weighted metric.}
\end{center}
\label{performace_3d}
\vskip -0.2in
\end{figure}

\begin{table}[ht]
\caption{UCI Datasets used in Experiments}
\label{table-UCI}
    \begin{center}
    \begin{small}
    \begin{sc}
    \begin{tabular}{lcccc}
    \toprule
    Dataset & Instances & Features & Labels & Classes \\
    \midrule
    Car    & 1727 & 5 & 2 & 4\\
    Nursery & 12960 & 7 & 2 & 3 \\
    CMC     & 1473 & 8 & 2 & 4 \\
    Phish   & 1353 & 7 & 3 & 3 \\
    Student & 649  & 30 & 3 & 5 \\
    \bottomrule
    \end{tabular}
    \end{sc}
    \end{small}
    \end{center}
\end{table}

\begin{table}[h]
\caption{Reports of $\text{PR}_{\text{LogReg}}$ (performance ratio of C-LogReg and LogReg) and $\text{PR}_{\text{DT}}$ (performance ratio of consistent Decision Tree classifier C-DT and Decision Tree classifier DT) by micro- and macro-averaging under Ordinal, Micro-F1 and Weighted$_{\bf{\frac{1}{2}}}$ performance metrics. Datasets come from UCI Machine Learning Repository and all results average over 100 iterations with 80\%-20\% train-test split. Consistent algorithms always have better performance. }
\label{table-uci-start}

    \begin{center}
    \begin{small}
    \begin{tabular}{l|ccccc|ccccccc}
    \toprule
    \rule{0pt}{13pt} Dataset & Car & Nursery & CMC & Phish &
    Student & Car & Nursery & CMC & Phish & Student\\[5pt]
    \hline
    &\multicolumn{5}{|c|}{\textnormal{\textbf{Ordinal} metric by \textbf{Micro-averaging}}} & \multicolumn{5}{c}{\textnormal{\textbf{Ordinal} metric by \textbf{Macro-averaging}}}\\
    \hline
    $\text{PR}_{\text{LogReg}}$  & 1.1530 & 1.0885 & 1.0181 & 1.0006 & 1.0121 &
    1.1485 & 1.0903 & 1.0210 & 1.0005 & 1.0103\\
    $\text{PR}_{\text{DT}}$      & 1.1266 & 1.0687 & 1.0214 & 1.0112 & 1.0124 &
    1.1272 & 1.0712 & 1.0212 & 1.0112 & 1.0139\\

    \hline
    &\multicolumn{5}{|c|}{\textnormal{\textbf{Micro-F1} metric by \textbf{Micro-averaging}}} &
    \multicolumn{5}{c}{\textnormal{\textbf{Micro-F1} metric by \textbf{Macro-averaging}}}\\
    \hline
    $\text{PR}_{\text{LogReg}}$  & 1.1972 & 1.0258 & 1.0006 & 1.0001 & 1.1205 &
    1.2091 & 1.0391 & 1.0022 & 1.0090 & 1.4144\\
    $\text{PR}_{\text{DT}}$      & 1.1261 & 1.0625 & 1.0006 & 1.0002 & 1.1915 &
    1.1190 & 1.0899 & 1.0032 & 1.0146 & 1.8712 \\

    \hline
    &\multicolumn{5}{|c|}{\textnormal{\textbf{Weighted$_{\bf{\frac{1}{2}}}$} metric by \textbf{Micro-averaging}}} & \multicolumn{5}{c}{\textnormal{\textbf{Weighted$_{\bf{\frac{1}{2}}}$} metric by \textbf{Macro-averaging}}}
     \\
    \hline
    $\text{PR}_{\text{LogReg}}$  & 2.4311 & 1.6060 & 1.6144 & 1.3942 & 1.1211 &
    2.4177 & 1.6105 & 1.6327 & 1.4055 & 1.1890\\
    $\text{PR}_{\text{DT}}$      & 3.1338 & 1.4815 & 1.6699 & 1.1235 & 1.1112 &
    3.1472 & 1.5057 & 1.7237 & 1.2328 & 1.1012\\
    \bottomrule
    \bottomrule
    \end{tabular}
    \end{small}
    \end{center}
\vskip -0.1in
\end{table}

\begin{table}[ht]
\caption{Comparison of performance on \textbf{Ordinal} metric for OrdRec classifier and consistent OrdRec classifier on dataset MovieLens 100k by \textbf{Micro-averaging} and \textbf{Macro-averaging}.}
\label{table-mf}
    \begin{center}
    \begin{small}
    \begin{sc}
    \begin{tabular}{lcccr}
    \toprule
    Average & OrdRec & C-OrdRec\\
    \midrule
    Micro    & 0.8603$\pm$0.0010 & \B 0.8640$\pm$0.0009\\
    Macro   & 0.8577$\pm$0.0032 & \B 0.8643$\pm$0.0022\\
    \bottomrule
    \end{tabular}
    \end{sc}
    \end{small}
    \end{center}
\end{table}

Figure~\ref{performace_3d} demonstrates the influence of $\mat{C}_1$ and $\mat{C}_2$ on $\text{PR}_{\text{LogReg}}$. For each pair of $\mat{C}_1$ and $\mat{C}_2$, $\text{PR}_{\text{LogReg}}$ is averaged over $30$ iterations with random 80\%-20\% train-test splits. We observe that the consistent classifier works much better than multi-class logistic regression algorithm with smaller $\mat{C}_1$ and larger $\mat{C}_2$ - we can see that in the dark red region, $\text{PR}_{\text{LogReg}}$ is over 3, which means the performance of C-LogReg is more than three times better than LogReg. To understand the trend of $\text{PR}_{\text{LogReg}}$ through $\mat{C}_1$ and $\mat{C}_2$, notice when the two classifier make different decisions, we have $\sum_{n=1}^N \ind {\argmin_{i \in \classNum} \vect{\loss}^{0-1}_j\vect{\eta}_i \neq \argmin_{j \in \classNum} \vect{\loss}^{\text{Weighted}}_j \vect{\eta}_j}$, so the classifiers differ when $\sum_{n=1}^N \ind {\argmax_{i \in \classNum} \vect{\eta}_i \neq \argmax_{j \in \classNum} \vect{A}_j \vect{\eta}_j}$.

Therefore, the larger the value of  $C_1$, the more peaked the class probability $\mat{\eta}_k$ for each class $k$, then the effect of $\mat{\eta}$ becomes more dominant in classification, which results in a smaller difference between two predictions. On the other hand, the larger the value of $C_2$, the more skewed $\mat{A}$ becomes as compared to $\mat{I}$, resulting in larger difference between two predictions. Furthermore, when $C_2=0$, $\mat{A}$ is exactly $\mat{I}$, so we have $PR_{\text{LogReg}}=1$ since C-LogReg and LogReg both optimize 0-1 accuracy. The benefit of post-processing must be compared to the additional computational costs. This experiment provides some guidance on this trade-off.

\vspace{-0.1cm}
\subsection{Benchmark Data: UCI Datasets}
\label{experiment:benchmark}
    We use real-world datasets from UCI repository \citep{Lichman:2013} to evaluate algorithm performances under Ordinal, Micro-F1 and Weighted$_{\frac{1}{2}}$ metrics by micro-averaging and macro-averaging, as shown in Table~\ref{table-metrics}. Table~\ref{table-UCI} presents the information about number of instances, features, labels and classes in each of five benchmark datasets we used.

    The algorithms evaluated are: (1) multi-output logistic regression classifier for each label (LogReg), (2) Random Forest with max depth as 3 (RF), (3) Decision Tree with max depth as 3 (DT), (4) consistent logistic regression (C-LogReg), (5) consistent Decision Tree (C-DT). (1) and (4) are linear and the rest are non-linear classifiers. The hyper-parameters are chosen using double-loop cross-validation. For simplicity, we only report the performance ratios of $\text{PR}_\text{LogReg}$ (performances of C-LogReg over LogReg) and $\text{PR}_\text{DT}$ (performances of C-DT over DT) here. See full original performances, variances and more comparisons of algorithms in the Appendix.

    Table~\ref{table-UCI} presents the information about number of instances, features, labels and classes in each of five benchmark datasets.
    The attributes of each dataset are split into two sets: features and labels. The label assignments are: (1) attributes 1-2 in Car Evaluation (Car), (2) attributes 7-8 in Nursery (Nursery), (3) attributes 7-8 in Contraceptive Method Choice (CMC), (4) attributes 1-3 in Website Phishing (Phish), (5) attributes 26-28 in wiki4HE (wiki4HE). The rest of the attributes are features.

    The performance results, averaged over 100 times with random 80\%-20\% train-test splits, are presented in Table~\ref{table-uci-start}. We notice that $\text{PR}_{\text{LogReg}}$ and $\text{PR}_{\text{DT}}$ keep greater than 1, which means that the consistent algorithms always have better performance under same $\widehat{\vect{\eta}}$: $\Psi(\vect{h}^{\text{C-LogReg}})>\Psi(\vect{h}^{\text{LogReg}})$ and $\Psi(\vect{h}^{\text{C-DT}})>\Psi(\vect{h}^{DT})$. $\text{PR}_{\text{LogReg}}$ and $\text{PR}_{\text{DT}}$ are enlarged specifically under Weighted$_{\frac{1}{2}}$ metric.

\vspace{-0.2cm}
\subsection{Benchmark Data: MovieLens}
For the third experiment, we apply multi-output classification to real world rating prediction.
We use MovieLens 100K Dataset \citep{harper2016movielens} which contains 100,000 tuples of user, movie and the rating of the user on the movie. We convert the dataset to a standard multi-output classification problem by representing the rating matrix as $Y \in [1,2,..K]^{N \times M}$, where $K$ is the number of rating choices $5$ (star $1$ to star $5$), $N$ is the number of users $943$ and $M$ is number of movies $1,682$ for this dataset. $Y^{(n)}_m$ corresponds to the rating of $n-th$ user (sample) $X^{(n)}$ on $m$-th movie (label). We plug in the class probabilities $\mat{\eta}_{k}^{m}(X^{(n)}) = \distr(Y^{m} = k | X^{(n)})$ for sample $X^{(n)}$, label $m$ and class $k$. The distribution $\mat{\eta}$ is derived using the Ordinal regression model OrdRec \citep{koren2011ordrec}, already shown to perform well for ordinal regression-based prediction.

Once the probabilities are estimated, OrdRec predicts the most likely rating class for each user movie pair.

 Since each user only rates a subset of movies, the label space is sparse. We assume that the missing labels are missing at random. Micro-averaging in this setting is equivalent to
$\hat{C}_{i,j}(\vect{h}) = \frac{1}{|\Omega|}\sum_{(m, n) \in \Omega}^{M} \hat{C}_{m,i,j}^{(n)}(\vect{h})$, where $\Omega = \set{(m, n) \mid \text{$y_{m}^{(n)}$ is observed}}$ is the set of observed entries. In Table~\ref{table-mf}, we report the micro-averaged and macro-averaged under Ordinal metric on MovieLens dataset of OrdRec and Consistent-OrdRec classifier (C-OrdRec). The results are averaged over 30 times with 50\%-50\% train-test split. We observe that under all averages, the C-OrdRec classifier results in better performance than OrdRec.

\section{Conclusion}
\label{conclusion}
We outline necessary and sufficient conditions for Bayes optimal multioutput classification using weighted classifiers -- which recovers binary, multiclass and multilabel classification as special cases. We further consider multi-output classification under generalized performance metrics with micro- and macro-averaging, and propose a provably consistent bisection-search classifier for fractional-linear metrics. In a variety of experiments, we find that the proposed estimator can significantly improve performance in practice.

\renewcommand{\bibsection}{\section*{References}}
\bibliography{momc}
\bibliographystyle{plainnat}

\clearpage
\appendix

\section{Proofs of Weighted Classifier Representation}
In this section, we provide the proofs for the theoretical results in the main paper. For ease of navigation, we summarize the notation used in the sequel in Table~\ref{tab:notation}.
\begin{table*}[h]
  \caption{Notation used in paper}
  \label{tab:notation}
  \begin{center}
    \begin{tabular}{ll}
      \toprule
      Symbol  & Description\\
      \midrule
      $N$     & number of instances \\
      $M$     & number of outputs \\
      $K$     & number of classes \\
      $\mat{C}\in [0,1]^{K\times M \times M}$     & confusion tensor \\
      $\mathcal{U} = \Psi(\mat{C})$       & utility of a classifier \\
      $\Delta_q$            & $\set{\mathbf{p}\in [0,1]^q: \sum_{i=1}^q p_i = 1}$\\
      $\mathcal{H}_r $          & set of randomized classifiers $\{\bh:  \mathcal{X} \to \Delta_K^{M}\}$\\
      $\vect{h}$              & multi-output classifier in $\mathcal{H}_d $ or $\mathcal{H}_r $\\
      $\vect{h}^*_{\Psi} \in \argmax_{\vect{h}}\; \mathcal{U}(\vect{h}). $ & Bayes optimal classifier with respect to performance metric $\Psi$.\\
      $\ind{\cdot}$             &  indicator function\\
      $\left[q\right]$            & $\set{1, 2, \cdots, q}$ for all $q \in \mathbb{Z}_+$\\
      $\dprod{\mat{A}, \mat{B}}$      & $\sum_{i_1, \cdots, i_M} A_{i_1,\cdots, i_M} B_{i_1, \cdots, i_M}$\\
      $\mathbb{\eta}^m_k(x) = P(Y_m=k | X=x)$ & conditional probability for $m$th output and class $k$.\\
      $\mathcal{C} = \set{\mat{C}(\vect{h}) \mid \vect{h}\in \cH_r}$ & set of feasible confusion tensors.\\
      $e_i$               & $i$th standard basis whose $i$th dimension is 1 and 0 otherwise $e_i=(0,\cdots,1,\cdots,0)$\\
      $\bm{1}_M\in\bR^M$        & all one vector of dimension $M$\\
      $ \bm{v} \otimes \bm{w}$    & outer product, $(\bm{v} \otimes \bm{w})_{ij}=v_iw_j$\\
      \bottomrule
    \end{tabular}
  \end{center}
\end{table*}

\subsection{Proof of Lemma~\ref{lemma:ConvexCompact}}
Observe that $\mathcal{H}$ is equivalent to the space of vector $L_{\infty}$ functions which is a compact function space.
  \begin{itemize}
  \item[-] Compact: Compactness of $\mathcal{C}$ follows from compactness of $\mathcal{H}$, since the mapping $C: \mathcal{H} \mapsto [0, 1]^{M \times K \times K}$ is linear and bounded.

\item[-] Convex: Suppose $\mathbf{C}_1 = \mat{C}(h_1), \mathbf{C}_2 = \mat{C}(h_2) \in \mathcal{C}$. For any $\mathbf{C}_0 = \alpha\mathbf{C}_1 + (1-\alpha) \mathbf{C}_2$, by linearity of expectation, we have that $\mathbf{C}_0 = C(h_0)$ where $h_0 = \alpha h_1 + (1-\alpha) h_2$ .
\end{itemize}

\subsection{Proof of Lemma~\ref{lem:supporting} and Lemma~\ref{lemma:only}}
The proof of Lemma~\ref{lem:supporting} is primarily geometric, and utilizes the following lemma characterizing supporting hyperplanes of convex sets.
\begin{lemma}[Supporting Hyperplane~\citep{rockafellar2015convex}]\label{lem:base_supporting}
  Let $\mathcal{S}$ be a compact convex set, then for every $\mathbf{s} \in \partial\mathcal{S}$ there exists a supporting hyperplane which intersects with $\mathcal{S}$ at $\mathbf{s}$.
\end{lemma}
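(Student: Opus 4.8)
The final statement is the classical supporting hyperplane theorem for compact convex sets, so my plan is to reconstruct the standard finite-dimensional argument; since the feasible set $\mathcal{C}$ lives in $\reals^{M\times K\times K}$, all compactness arguments are available without appeal to Hahn--Banach. (In the paper itself one could simply invoke it as cited from~\citep{rockafellar2015convex}, but the proof is short enough to reproduce.) The core idea is to reduce the existence of a supporting hyperplane at a boundary point $\mathbf{s}$ to the easier fact that a single point lying outside a closed convex set can be separated from it, and then to pass to the limit as the exterior point approaches $\mathbf{s}$.

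Concretely, first I would note that because $\mathcal{S}$ is closed and $\mathbf{s}\in\partial\mathcal{S}$, every neighborhood of $\mathbf{s}$ meets the (open) complement of $\mathcal{S}$; hence there is a sequence $\mathbf{p}_k\notin\mathcal{S}$ with $\mathbf{p}_k\to\mathbf{s}$. For each $k$, let $\Pi_{\mathcal{S}}(\mathbf{p}_k)$ be the unique Euclidean projection of $\mathbf{p}_k$ onto $\mathcal{S}$ (well-defined by closedness and convexity), and set $\mathbf{a}_k = (\mathbf{p}_k - \Pi_{\mathcal{S}}(\mathbf{p}_k))/\norm{\mathbf{p}_k - \Pi_{\mathcal{S}}(\mathbf{p}_k)}$. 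The variational inequality characterizing the projection gives $\dprod{\mathbf{a}_k,\mathbf{x}}\le\dprod{\mathbf{a}_k,\Pi_{\mathcal{S}}(\mathbf{p}_k)}\le\dprod{\mathbf{a}_k,\mathbf{p}_k}$ for every $\mathbf{x}\in\mathcal{S}$, so each $\mathbf{a}_k$ is a unit separating normal for $\mathbf{p}_k$. Since the unit sphere in $\reals^{M\times K\times K}$ is compact, I extract a subsequence with $\mathbf{a}_k\to\mathbf{a}$, $\norm{\mathbf{a}}=1$. Fixing any $\mathbf{x}\in\mathcal{S}$ and taking limits in $\dprod{\mathbf{a}_k,\mathbf{x}}\le\dprod{\mathbf{a}_k,\mathbf{p}_k}$, using $\mathbf{p}_k\to\mathbf{s}$ and $\mathbf{a}_k\to\mathbf{a}$, yields $\dprod{\mathbf{a},\mathbf{x}}\le\dprod{\mathbf{a},\mathbf{s}}$ for all $\mathbf{x}\in\mathcal{S}$. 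As $\mathcal{S}$ is closed we have $\mathbf{s}\in\mathcal{S}$, so the hyperplane $\{\mathbf{x}:\dprod{\mathbf{a},\mathbf{x}}=\dprod{\mathbf{a},\mathbf{s}}\}$ passes through $\mathbf{s}$ and contains $\mathcal{S}$ in one closed halfspace, i.e.\ it is a supporting hyperplane meeting $\mathcal{S}$ at $\mathbf{s}$.

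The main obstacle I anticipate is the degeneracy of the limiting normal: a priori the separating directions $\mathbf{a}_k$ could shrink toward zero and give the vacuous ``hyperplane'' $\mathbf{a}=\mathbf{0}$. Normalizing to the unit sphere before taking limits is exactly what prevents this, and compactness of the sphere guarantees a nonzero limit. The remaining ingredient to justify carefully is the projection/separation step itself, which rests on the existence and uniqueness of the nearest-point projection onto a closed convex set together with the associated obtuse-angle variational inequality; both are standard but should be cited or briefly sketched. No genuinely hard estimate is involved, so the proof is essentially a packaging of the projection theorem with a sphere-compactness limiting argument.
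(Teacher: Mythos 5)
Your argument is correct: the exterior sequence exists because $\mathbf{s}\in\partial\mathcal{S}$ puts $\mathbf{s}$ in the closure of the complement, the projection variational inequality gives the separating normals, and normalizing before passing to a convergent subsequence on the unit sphere correctly rules out the degenerate zero normal. The paper offers no proof of this lemma at all --- it is imported by citation from~\citep{rockafellar2015convex} --- and your reconstruction is precisely the standard finite-dimensional projection argument behind that cited result, so it matches the intended proof in substance.
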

As a result, given $\mathbf{a}$ as the normal of hyperplane associated with a point $\mathbf{s}*\in \partial\mathcal{S}$, if follows that:
\begin{equation*}
\mathbf{s}* \in \underset{s \in \mathcal{S}}{\text{argmax}} \dprod{\mathbf{a}, \mathbf{s}}
\end{equation*}
Thus, $\mathcal{S}$ has a dual representation, as the intersection of all the half-spaces associated with its supporting hyperplanes.

\begin{proof}[Proof of Lemma~\ref{lem:supporting}]
By the compactness of $\cC$, there exists a $\mat{C}^*$ such that $\mat{C}^* = \arg\max_{\mat{C} \in \cC} \Psi(\mat{C})$. Hence there exists $\bh^* \in \cH_r$ such that $\mat{C}^* = \mat{C}(\bh^*)$. Equivalently, $\bh^* = \arg\max_{\bh\in \cH_r} \cU(\bh)$.
  Let $\mathbf{C}^* \in \partial \mathcal{C}$, then this implies that $\exists \mathbf{L}$ such that:
  $$\mathbf{C}^* = \mathbf{C}(h^*) \in  \argmax_{\mathbf{C} \in \mathcal{C}} \dprod{\mathbf{L}, \mat{C}},$$
  where $h^* \in \argmax_{h \in \mathcal{H}} \dprod{\mathbf{A}^*, \mat{C}(h)}$.
  When $\mathbb{P}$ satisfies Assumption~\ref{assumption1}, this is equivalent to a linear utility metric. For this case, \cite{narasimhan2015consistent} have shown that the max classifier is Bayes optimal almost everywhere.
\end{proof}

\begin{proof}[Proof of Lemma~\ref{lemma:only}]
$\argmax_{\mathbf{C} \in \mathcal{C}} \dprod{\mathbf{L}, \mat{C}}$ is the optimization of a linear function over a compact convex set $\cC$, thus the maximum $\mat{C}^*$ is necessarily achieved at the boundary $\mat{C}^* \in \partial \cC$.
\end{proof}

\subsection{Proof of Lemma~\ref{lem:gradient_support}}
When $\Phi$ is a monotonic function of $\mat{C}$, then $\mat{C}^*$ necessarily lies on $\partial \mathcal{C}$. By Lemma~\ref{lemma:only}, we know that the Bayes optimal follows the weighted form.
By KKT conditions~\citep{boyd2004}, when $\mat{C}^*$ is optimal, the supporting hyperplane must equal the negative gradient of the metric at $\mat{C}^*$. The remainder follows from the scale and shift invariance of the loss matrix.

\subsection{Proof of Theorem~\ref{thm:max_over_eta}}
Let $\mat{L}$ be as defined in Lemma~\ref{lem:supporting}.
By definition in Eq.~\eqref{eq:defC}, we have
\bas{
\max_{\bC} \dprod{\mat{L}, \mat{C}} =& \max_{\bC} \sum_{m,k,\ell} \mat{L}_{m,k,\ell} \mat{C}_{m,k,\ell}\\
=&  \max_{\bh} \sum_{m,k,\ell} \mat{L}_{m,k,\ell} \int_x \bh_k^m(x) \mathbb{\eta}^m_\ell(x) d\bP(x)
}
Note the above maximization is decomposable with respect to $x$ given $\mathbb{\eta}(x)$. For any given $x$, to maximize $\sum_{m,k,\ell} \mat{L}_{m,k,\ell} \bh_k^m(x) \mathbb{\eta}^m_\ell(x) $ subject to the constraint that $\sum_k \bh_k^m(x)=1, \bh_k^m\ge 0$ is equivalent to finding the largest index $k^*$ such that $k^* = \arg\min \dprod{\mat{L}_{m,k,\cdot}, \mat{\eta}^m} $, and set $h_{k^*}^m=1$ and 0 otherwise. This completes the proof.

\section{Proofs for averaging metrics}
\begin{proof}[Proof of Proposition~\ref{prop:micro_share_plane}]
By definition,
\bas{
\Psi_{\text{micro}}(\mat{ \widehat{C} }) = \psi \left(  \widehat{\bC} \bigcdot_1 \frac{1}{M}\bfone_M \right)
}
Taking derivative with respect to $\mat{C}_{m,k,\ell}$, by the chain rule we have
\bas{
\nabla_{m,k,\ell} \Psi( \mat{C} ) =& \nabla_{k,\ell}\psi\left( \frac{1}{M}\sum_{m=1}^M \mat{C}_{m,\cdot,\cdot}\right) e_k\otimes e_\ell \otimes \frac{1}{M}\bfone_M,
}
where  $e_k$ is the $k\th$ standard normal vector.
Hence,
\bas{
\nabla \Psi(\mat{C})=& \nabla \psi\left( \frac{1}{M}\sum_{m=1}^M \mat{C}_{m,\cdot,\cdot}\right) \otimes \frac{1}{M}\bfone_M
}
By Lemma~\ref{lem:gradient_support}, the supporting hyper-plane is $-\nabla \psi\left( \frac{1}{M}\sum_{m=1}^M \mat{C}^*_{m,\cdot,\cdot}\right) \otimes \frac{1}{M}\bfone_M$. By its formulation, we know each slice along the 3rd dimension is the same matrix $-\nabla \psi\left( \frac{1}{M}\sum_{m=1}^M \mat{C}^*_{m,\cdot,\cdot}\right)$, the claim is proved.
\end{proof}

\section{Proof of Proposition~\ref{macro-decompose}}
\begin{proposition}[Decomposability of macro-averaged Bayes Optimal]
  The macro averaged utility decomposes as $\mathcal{U}_\text{macro}(\vect{h}) = \sum_{m\in [M]} \mathcal{U}_m(h_m)$, and the Bayes optimal macro-averaged classifier decomposes as:
  $h_m^*(x) = \underset{h_m}{\argmax} \; \mathcal{U}_m(h_m)$.
\end{proposition}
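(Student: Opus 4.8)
The plan is to exploit the fact that the macro-averaged utility is a \emph{sum} of per-output terms, each depending on only a single coordinate $h_m$ of the classifier, together with the fact that the feasible classifier set $\cH_r$ is a Cartesian product over the outputs.

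First I would unwind the definition of the macro-averaged metric. Writing $\mathcal{U}_{\text{macro}}(\vect{h}) = \Psi_{\text{macro}}(\mat{C}(\vect{h})) = \sum_{m=1}^M \alpha_m \psi(\bC_{m\cdot\cdot}(\vect{h}))$, the crucial observation is that the $m$th frontal slice of the confusion tensor, $\bC_{m\cdot\cdot}$, depends only on $h_m$. This follows directly from the definition in Eq.~\eqref{eq:defC}, $\mat{C}_{m,k,\ell} = \int_x h_k^m(x)\, \eta_\ell^m(x)\, d\bP(x)$, whose right-hand side involves only the $m$th output predictor $h^m$ and the fixed marginal $\eta^m$. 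Defining $\mathcal{U}_m(h_m) := \alpha_m \psi(\bC_{m\cdot\cdot}(h_m))$ then gives the claimed additive decomposition $\mathcal{U}_{\text{macro}}(\vect{h}) = \sum_{m\in[M]} \mathcal{U}_m(h_m)$.

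Next I would argue that the unconstrained maximization decomposes. Since $\cH_r = \set{\bh : \mathcal{X} \to (\Delta^K)^M}$ is a product space — a choice of $\vect{h}$ is exactly a tuple $(h_1,\dots,h_M)$ of independently selectable predictors $h_m : \mathcal{X} \to \Delta^K$ — and since the objective is a sum whose $m$th summand involves only $h_m$, there is no coupling across outputs. Existence of each per-output maximizer is inherited from the compactness of the feasible set (Lemma~\ref{lemma:ConvexCompact}). Hence $\max_{\vect{h}} \sum_m \mathcal{U}_m(h_m) = \sum_m \max_{h_m} \mathcal{U}_m(h_m)$, and a joint maximizer is obtained coordinatewise by setting $h_m^* = \argmax_{h_m} \mathcal{U}_m(h_m)$ for each $m$.

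The argument is essentially structural and presents no real technical obstacle; the only point requiring care is the clause ``without additional classifier restrictions.'' The separation step relies on the feasible set factorizing as a product over outputs, so that each $h_m$ ranges freely and independently. If one instead imposed constraints coupling the outputs (as is common in practice to exploit label correlation), the feasible set would no longer be a product, the coordinatewise maximizer need not be jointly feasible, and the decomposition would break. I would state this limitation explicitly, matching the paper's remark on the gap between population and finite-sample behavior.
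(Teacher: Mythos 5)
Your proof is correct and follows essentially the same route as the paper's: both observe that the $m$th confusion slice $\bC_{m\cdot\cdot}$ depends only on $h_m$, so the macro-averaged utility is additively separable and the maximization decomposes output-by-output. You are in fact slightly more careful than the paper, which simply asserts ``the proof follows by definition'' (then illustrates with the population confusion and the loss-based special case), whereas you make explicit the step the paper leaves implicit --- that $\cH_r$ factorizes as a product over outputs, which is exactly what the clause ``without additional classifier restrictions'' is guarding.
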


\paragraph{Proof}
The proof follows by definition.
The confusion matrix at the population level, instead of calculating from the samples, is replaced by the expectation:
\begin{align*}
(\mat{C}_m)_{i,j}(\vect{h})
&= \expect{(\sConf_m)_{i,j}(\vect{h})}\\
&= \frac{1}{N}\sum_{n=1}^{N}\expectx{n}{\widehat{\mat{C}}_{m,i,j}^{(n)}(h_m)} \\
&= \prob{Y_m=i, \hypo_m(X)=j}
\end{align*}
and the utility at population level is given by
\[
\mathcal{U}_{\text{macro}}(\vect{h}) = \frac{1}{M}\sum_{m=1}^{M}\perf\left(\vect{\conf}_m(h_m)\right)
\]

Specifically, the utility of loss-based performance metric is given by
\begin{align*}
\mathcal{U}_\text{macro}(\vect{h})
&= \frac{1}{M}\sum_{m=1}^{M}\Psi^{\mat{L}^m}(\mat{C}_m(h_m))\\
&= \frac{1}{M}\sum_{m=1}^{M}\left[1 - \dprod{\mat{\loss}^m, \mat{\conf}_m(h_m)}\right]\\
&= 1 - \frac{1}{M}\sum_{m=1}^{M}\left[\dprod{\mat{\loss}^m, \mat{\conf}_m(h_m)}\right]
\end{align*}

\section{Proof of Theorem~\ref{consistency-theorem}}

\begin{theorem}[$\perf$-regret of bisection based algorithm]
($\perf$-regret of bisection based algorithm). Let $\psi:\left[0, 1\right]^{K\times K} \to \mathbb{R}_+$ be $\psi(\mat{C}) = \frac{\dprod{\mat{A}, \mat{C}}}{\dprod{\mat{B}, \mat{C}}}$, where $\mat{A}$, $\mat{B} \in \mathbb{R}^{K\times K}_{+}$, $\sup_{(\mat{C}) \in \mathcal{C}_\mathbb{P}} \psi{(\mat{C})} \leq 1$, and $\min_{\mat{C}\in \mathcal{C}_\mathbb{P}}\dprod{\mat{B}, \mat{C}} \geq b$, for some $b>0$. Let $\mathcal{S} = (\mathcal{S}_1, \mathcal{S}_2) \in {(\mathcal{X} \times \mathcal{Y})}^N$ be a training set drawn i.i.d from a distribution $\mathbb{P}$, where $\mathcal{Y} = \left[K\right]^M$. Let $\widehat{\vect{\eta}}: \mathcal{X} \to [\Delta_K]^M$ be the model learned from $\mathcal{S}_1$ in Algorithm~\ref{algo:bisection} and $\vect{h}^{BS}: \mathcal{X} \to \mathcal{Y}$ be the classifier obtained over $\kappa N$ iterations. Then for any $\delta \in (0, 1]$, with probability at least $1-\delta$ (over draw of $\mathcal{S}$ from $\mathbb{P}^{N}$), we have
\begin{align*}
  \util ^*_\mathbb{P} &- \util _\mathbb{P} {[\vect{h}^{BS}]} \leq \frac{2\tau}{m} \sum_{m} \expectx{X}{\norm{\widehat{\vect{\eta}}^m(X) - {\vect{\eta}^m(X)}}_{1}}
  + \\
  & 2 \sqrt{2}C_{\tau} \sqrt{\frac{K^2 \log(K)\log(MN) + \log(K^2/\delta)}{MN}} + 2^{-\kappa MN}
\end{align*}
where $\tau = \frac{1}{b}(||A||_{1} + ||B||_{1})$ and $C>0$ is a distribution-independent constant.
\end{theorem}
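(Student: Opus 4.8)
The plan is to bound the $\psi$-regret of $\vect{h}^{BS}$ by three contributions---plug-in approximation, empirical estimation, and bisection optimization---matching the three terms on the right-hand side. The central first step is to reduce the fractional-linear regret to a linear one. Writing $\gamma^* = \util^*_{\mathbb{P}} = \max_{\mat{C}\in\mathcal{C}}\psi(\mat{C})$, the elementary identity for ratios of linear forms gives, for any feasible $\mat{C}$,
\[
\gamma^* - \psi(\mat{C}) = \frac{\dprod{\gamma^*\mat{B} - \mat{A}, \mat{C}}}{\dprod{\mat{B}, \mat{C}}}.
\]
Since $\gamma^* = \max_{\mat{C}}\psi(\mat{C})$ is exactly the bisection fixed point $\min_{\mat{C}}\dprod{\gamma^*\mat{B}-\mat{A},\mat{C}}=0$, the numerator is nonnegative, and using $\dprod{\mat{B},\mat{C}}\ge b$ we obtain
\[
\gamma^* - \psi(\mat{C}) \le \tfrac{1}{b}\Big(\dprod{\mat{L}^*, \mat{C}} - \min_{\mat{C}'}\dprod{\mat{L}^*, \mat{C}'}\Big),\qquad \mat{L}^* = \gamma^*\mat{B}-\mat{A}.
\]
It therefore suffices to control the \emph{linear} regret of $\vect{h}^{BS}$ against the unknown optimal loss $\mat{L}^*$, with the factor $1/b$ absorbed into $\tau$.

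I would then split this linear regret into the three sources. For the optimization error, the bisection keeps $\gamma^*$ inside a bracket whose width halves each round; starting from the unit interval (legitimate because $0\le\psi\le1$) and running $\kappa MN$ rounds yields a returned threshold $\widehat\gamma \ge \gamma^* - 2^{-\kappa MN}$, and the weighted classifier optimal for $\widehat{\mat{L}}=\widehat\gamma\mat{B}-\mat{A}$ attains $\psi$-value at least $\widehat\gamma$, contributing the $2^{-\kappa MN}$ term. For the plug-in error I fix the loss matrix and invoke the pointwise argument of \citet{narasimhan2015consistent}: at each $x$, choosing $\arg\min_k\dprod{\mat{L}_{k\cdot m},\widehat{\vect{\eta}}^m(x)}$ instead of the Bayes-optimal class costs at most $2\norm{\mat{L}}_\infty\norm{\widehat{\vect{\eta}}^m(x)-\vect{\eta}^m(x)}_1$ in excess linear loss. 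Integrating over $x$, averaging over the $M$ micro-pooled outputs, and bounding $\norm{\mat{L}^*}\le\norm{\mat{A}}_1+\norm{\mat{B}}_1$ together with the $1/b$ above yields the leading term $\frac{2\tau}{M}\sum_m\expectx{X}{\norm{\widehat{\vect{\eta}}^m(X)-\vect{\eta}^m(X)}_1}$.

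The remaining estimation error arises because each bisection step evaluates candidate classifiers through the empirical confusion $\mat{\sConf}^{\mathcal{S}_2}$ rather than its population counterpart. I would establish a uniform deviation bound over the family of weighted classifiers indexed by a threshold $\gamma\in[0,1]$ (equivalently by loss matrices $\gamma\mat{B}-\mat{A}$). Because micro-averaging pools all outputs, the effective sample size is $MN$; a covering-number argument over the $K^2$ confusion entries---each a threshold of a $K$-dimensional linear score---supplies complexity $O(K^2\log K)$, while a union bound over the $\kappa MN$ iterations and over confidence contributes the $\log(MN)$ and $\log(K^2/\delta)$ factors, producing the $2\sqrt{2}C_\tau\sqrt{(K^2\log K\log(MN)+\log(K^2/\delta))/(MN)}$ term. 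A single high-probability union bound then combines the three estimates into the claimed inequality.

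I expect the main obstacle to be precisely this uniform-concentration step. The classifiers whose confusions are compared during bisection are selected \emph{adaptively} from data-dependent thresholds, so the empirical process must be controlled uniformly over the entire threshold family rather than at one fixed classifier; carrying out the covering-number computation for weighted classifiers, and checking that the resulting deviation rate remains compatible with the $1/b$ and $\tau$ scalings inherited from the fractional-to-linear reduction, is the delicate part. By contrast, the optimization and plug-in terms follow fairly directly from the geometric bisection guarantee and the pointwise excess-loss bound sketched above.
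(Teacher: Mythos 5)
Your outline is sound, but it takes a genuinely different route from the paper. You re-derive the bound from first principles: the Dinkelbach-style reduction $\gamma^* - \psi(\mat{C}) = \dprod{\gamma^*\mat{B}-\mat{A},\mat{C}}/\dprod{\mat{B},\mat{C}}$ with $\dprod{\mat{B},\mat{C}}\ge b$, then a three-way split into plug-in, estimation, and bisection errors --- which is precisely the skeleton of the proof of Theorem 17 in \citet{narasimhan2015consistent}. The paper instead proves the theorem by a reduction that outsources all of that machinery: it augments the instance space to $Z=[X,O]$ where $O$ is the output index drawn uniformly ($\prob{O=m}=1/M$), observes that the micro-averaged multioutput problem is then a single multiclass problem with $MN$ samples whose confusion matrix $D_{i,j}(f)$ equals the averaged $\frac{1}{M}\sum_m C_{i,j}(h_m)$ and whose conditional distribution is $\vect{\gamma}(z)=\vect{\eta}^m(x)$, invokes Theorem 17 of \citet{narasimhan2015consistent} as a black box on this augmented problem, and finally rewrites $\expectx{Z}{\norm{\widehat{\vect{\gamma}}(Z)-\vect{\gamma}(Z)}_1}$ as $\frac{1}{M}\sum_m \expectx{X}{\norm{\widehat{\vect{\eta}}^m(X)-\vect{\eta}^m(X)}_1}$. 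The paper's route buys brevity and delegates exactly the step you flag as delicate --- the uniform concentration over the adaptively selected family of weighted classifiers --- to the cited result, whereas your route is self-contained but leaves that covering-number argument as a sketch; completing it would essentially reproduce the cited proof. One caveat applies to both routes and is worth your attention: claiming effective sample size $MN$ presumes the $MN$ pooled observations concentrate independently, yet the $M$ outputs attached to a single instance $x^{(n)}$ are correlated, so the augmented samples are not i.i.d.; the paper's reduction quietly absorbs this when applying an i.i.d.\ theorem, and your covering-number computation would have to confront it explicitly (e.g., concentrating blockwise at rate $\sqrt{1/N}$ in the worst case, or arguing the confusion entries for distinct $m$ behave as a bounded average).
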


\paragraph{Proof}
We prove by exploring the equivalent multi-class classification under multi-output classification paradigm. Let $Z=[X, O]$ denotes a new instance space $\mathcal{Z}$ that adds a feature $m$ to $\mathcal{X}$ space, with $\prob{Z}=\prob{X} \prob{O}$ and uniform label distribution that $\prob{O = m}=\frac{1}{m}$. Then we construct a multi-class classification with $NM$ instances where $\mathcal{Z} \to \Delta_K$.

According to definition, we have new label space, classifier $f(z)$, conditional distribution $\gamma(z)$ and marginal distribution $\prob{Z}$ as
\begin{align*}
&Y_{z} = Y_{m}(x) \in [K] \\
&f(z) = h_{m}(x) \in [K]\\
&\vect{\gamma}(z) = \vect{\eta}^{m}(x)\\
&\prob{z} = \frac{1}{m}\prob{x}\\
\end{align*}
The confusion matrix $D(f)$ is
\begin{align*}
D_{i,j}(f) &= \mathbb{E}_z \mathbb{E}_{y,z}(\ind{y=i, f(z)=j})\\
    & =\frac{1}{m} \sum_{m=1}^M \mathbb{E}_x \mathbb{E}_{y,x} (\ind{y_m=i, h_m(x)=j})\\
    & = \frac{1}{m} C_{i,j}(h_m)
\end{align*}
And the optimal classifier $f^*(z)$ is
\begin{align*}
f^*(z) &= \argmin_{k\in[K]} \vect{\gamma}(z)^T L_k = \argmin_{k\in[K]} (\vect{\eta}^m)^T{(x)} L_k \\
\end{align*}
Then, the multi-class classification $\mathcal{Z} \to \Delta_K$ is equivalent to the original multi-output classification ${(\mathcal{X} \times \left[K\right]^M)}^N \to [\Delta_K]^M$. By Theorem 17 in ~\cite{narasimhan2015consistent}, we have
\begin{align*}
  & \util^*_\mathbb{P} - \util_\mathbb{P} {[\vect{h}^{BS}]} \leq 2\tau \expectx{Z}{\norm{\widehat{\vect{\gamma}}(Z) - {\vect{\gamma}(Z)}}_{1}} \\
    & + 2 \sqrt{2}C_{\tau} \sqrt{\frac{K^2 \log(K)\log(MN) + \log(K^2/\delta)}{MN}} + 2^{-\kappa MN},
\end{align*}
Also note that
\begin{align*}
\expectx{Z}{f(Z)} &= \mathbb{E}_m\expectx{X}{f([X, m])}\\
      & = \frac{1}{m}\sum_m \expectx{x}{\norm{\widehat{\vect{\eta}}^m(X) - \vect{\eta}^m(X) }_1}
\end{align*}
Then finally, we have
\begin{align*}
  & \util^*_\mathbb{P} - \util_\mathbb{P} {[\vect{h}^{BS}]} \leq \frac{2\tau}{m} \sum_{m} \expectx{X}{\norm{\widehat{\vect{\eta}}^m(X) - {\vect{\eta}^m(X)}}_{1}} \\
    & + 2 \sqrt{2}C_{\tau} \sqrt{\frac{K^2 \log(K)\log(MN) + \log(K^2/\delta)}{MN}} + 2^{-\kappa MN},
\end{align*}

\section{Bisection Method - Algorithm}

\begin{algorithm}[h]
  \caption{Bisection Method (for micro averaging of multi-class fractional linear metrics)}
  \label{algo:bisection}
  \begin{tabular}{{@{}ll@{}}}
    \textbf{Input}: & $\mathcal{S} = \set{x^{(n)}, \vect{y}^{(n)}}_{n=1}^N\in(\mathcal{X}\times\range{K}^M)^N$\\
    & $\Psi(\mat{C})=\frac{\dprod{\mat{A}, \mat{C}}}{\dprod{\mat{B}, \mat{C}}}$ where $\mat{A}, \mat{B}\in\mathbb{R}^{K\times K}$.
  \end{tabular}

  \textbf{Parameter}: $\kappa\in\mathbb{N}$

  \begin{algorithmic}[1]
    \STATE Split $\mathcal{S}$ into $\mathcal{S}_1$ and $\mathcal{S}_2$ with size $\left\lceil\frac{N}{2}\right\rceil$ and $\left\lfloor\frac{N}{2}\right\rfloor$; estimate $\widehat{\vect{\eta}}=\set{\widehat{\vect{\eta}}^m}_{m=1}^M$ using $\mathcal{S}_1$
    \STATE Initialize $\vect{h}^0: \mathcal{X}\to\range{K}^M$, $\alpha = 0$, $\beta = 1$
    \FOR{$t = 1$ to $\kappa N$}
    \STATE $\gamma^t = (\alpha^{t-1} + \beta^{t-1})/2$
    \STATE $\widehat{\mat{L}}^t = \gamma^t\mat{B} - \mat{A}$, scaled and shifted to $\range{0, 1}^{K\times K}$.
    \STATE Define $\vect{\widehat{g}}^t = \set{\widehat{g}^t_m}_{m=1}^M$ where ${\widehat{g}^t_m(x)} \in {\argmin_{k\in\range{\classNum}}(\vect{\widehat{L}}_k^t)^\intercal\vect{\widehat{\eta}}^m(x)}$
    \IF{$\widehat{\mathcal{U}}^{\mathcal{S}_2}_\text{micro}(\mathbf{\widehat{g}}^t)\geq\gamma^t$}
    \STATE $\alpha^t = \gamma^t$, $\beta^t = \beta^{t-1}$, $\vect{h}^t = \vect{\widehat{g}}^t$
    \ELSE \STATE $\alpha^t = \alpha^{t-1}$, $\beta^t = \gamma^{t}$, $\vect{h}^t = \vect{h}^{t-1}$
    \ENDIF
    \ENDFOR
  \end{algorithmic}
  \textbf{Output}: $\vect{h}^{(\kappa N)}$
\end{algorithm}

\vspace{-0.1cm}
\subsection{Consistency of Bisection Algorithm}
\label{consistency}

Consistency is a desirable property for a classifier, as it suggests that the procedure has good large sample statistical properties.

\begin{theorem}
\label{consistency-theorem}
($\perf$-regret of bisection based algorithm). Let $\psi:\left[0, 1\right]^{K\times K} \to \mathbb{R}_+$ be $\psi(\mat{C}) = \frac{\dprod{\mat{A}, \mat{C}}}{\dprod{\mat{B}, \mat{C}}}$, where $\mat{A}$, $\mat{B} \in \mathbb{R}^{K\times K}_{+}$, $\sup_{(\mat{C}) \in \mathcal{C}_\mathbb{P}} \psi{(\mat{C})} \leq 1$, and $\min_{\mat{C}\in \mathcal{C}_\mathbb{P}}\dprod{\mat{B}, \mat{C}} \geq b$, for some $b>0$. Let $\mathcal{S} = (\mathcal{S}_1, \mathcal{S}_2) \in {(\mathcal{X} \times \mathcal{Y})}^N$ be a training set drawn i.i.d from a distribution $\mathbb{P}$, where $\mathcal{Y} = \left[K\right]^M$. Let $\widehat{\vect{\eta}}: \mathcal{X} \to [\Delta_K]^M$ be the model learned from $\mathcal{S}_1$ in Algorithm~\ref{algo:bisection} and $\vect{h}^{BS}: \mathcal{X} \to \mathcal{Y}$ be the classifier obtained over $\kappa N$ iterations. Then for any $\delta \in (0, 1]$, with probability at least $1-\delta$ (over draw of $\mathcal{S}$ from $\mathbb{P}^{N}$), we have
$
  \util ^*_\mathbb{P} - \util _\mathbb{P} {[\vect{h}^{BS}]} \leq \frac{2\tau}{m} \sum_{m} \expectx{X}{\norm{\widehat{\vect{\eta}}^m(X) - {\vect{\eta}^m(X)}}_{1}}
  + 2 \sqrt{2}C_{\tau} \sqrt{\frac{K^2 \log(K)\log(MN) + \log(K^2/\delta)}{MN}} + 2^{-\kappa MN},
$
where $\tau = \frac{1}{b}(||A||_{1} + ||B||_{1})$ and $C>0$ is a distribution-independent constant.
\end{theorem}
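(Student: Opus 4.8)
The plan is to reduce the micro-averaged multioutput problem to a single multiclass classification problem, so that the known regret guarantee for the bisection method in multiclass classification (Theorem~17 of~\citet{narasimhan2015consistent}) can be invoked almost verbatim. The crucial observation is that micro-averaging with uniform weights pools all outputs into one confusion matrix, and this pooling is exactly what one obtains from a multiclass problem whose instance space has been augmented by the output index. I would therefore spend the first part of the argument setting up this reduction carefully and checking that it is an \emph{exact} equivalence at the level of both the objective and the algorithm.

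Concretely, I would introduce an augmented instance $Z = [X, O]$, where $O$ is uniform on $\range{M}$ and independent of $X$, so that $\bP(Z) = \frac{1}{M}\bP(X)$. The conditional label distribution is $\vect{\gamma}(z) = \vect{\eta}^m(x)$ when $z = [x,m]$, any multiclass classifier $f$ on the augmented space corresponds to the multioutput classifier $h_m(x) = f([x,m])$, and a direct computation gives that the augmented confusion matrix satisfies $D_{i,j}(f) = \frac{1}{M}\sum_{m=1}^M C_{i,j}(h_m)$. This is precisely the uniformly micro-averaged confusion, so $\psi(D(f)) = \perf_{\text{micro}}(\mat{C}(\vect{h}))$ and the two optimization problems share the same value and optimizer. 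I would then verify that running the multiclass bisection method on the augmented space reproduces Algorithm~\ref{algo:bisection} step for step: the thresholds $\gamma^t$, loss matrices $\widehat{\mat{L}}^t = \gamma^t\mat{B} - \mat{A}$, induced classifiers $f^t(z) = \argmin_k \dprod{\widehat{\mat{L}}^t_k, \widehat{\vect{\gamma}}(z)}$, and the interval updates all coincide, with effective sample size $MN$.

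With the reduction in place, Theorem~17 of~\citet{narasimhan2015consistent} yields a three-term bound: an approximation term $2\tau\,\expectx{Z}{\norm{\widehat{\vect{\gamma}}(Z) - \vect{\gamma}(Z)}_1}$ controlled by the plug-in error with Lipschitz constant $\tau = \frac{1}{b}(\norm{A}_1 + \norm{B}_1)$, which is finite because $\dprod{\mat{B},\mat{C}}\geq b$; a statistical concentration term of order $\sqrt{(K^2\log K\,\log(MN)+\log(K^2/\delta))/(MN)}$ arising from evaluating the empirical metric on $\mathcal{S}_2$; and a bisection term $2^{-\kappa MN}$ from halving the feasible interval, where the condition $\sup_{\mat{C}}\psi(\mat{C})\leq 1$ guarantees the optimal value lies in the bisection interval $[0,1]$. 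The final step is the identity $\expectx{Z}{\norm{\widehat{\vect{\gamma}}(Z) - \vect{\gamma}(Z)}_1} = \frac{1}{M}\sum_m \expectx{X}{\norm{\widehat{\vect{\eta}}^m(X) - \vect{\eta}^m(X)}_1}$, obtained by averaging over the uniform $O$, which converts the first term into the stated form and produces the claimed bound.

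The hard part will be the concentration term. The augmented dataset is not i.i.d.: the $M$ points $\{[x^{(n)}, m]\}_{m=1}^M$ generated from a single original instance $x^{(n)}$ share the same feature and are therefore dependent, so Theorem~17 cannot be applied as a black box over $MN$ independent samples. I would address this by returning to the $N$ genuinely independent original samples and applying a bounded-difference (McDiarmid) argument directly to the empirical micro-averaged confusion on $\mathcal{S}_2$: replacing one original instance perturbs each confusion entry by at most $1/N$, and a covering or Rademacher bound over the finite family of threshold classifiers swept by the bisection supplies the $K^2\log K\,\log(MN)$ and $\log(K^2/\delta)$ factors. Reconciling this block-dependence structure with the claimed $MN$ scaling in the denominator is the delicate point, and is where I would expect to spend most of the effort.
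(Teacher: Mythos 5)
Your reduction is, step for step, the paper's own proof: the appendix introduces exactly the same augmented instance $Z=[X,O]$ with $O$ uniform over the $M$ outputs, the same identifications $\vect{\gamma}(z)=\vect{\eta}^m(x)$ and $f(z)=h_m(x)$, derives $D_{i,j}(f)=\frac{1}{M}\sum_{m}C_{i,j}(h_m)$, invokes Theorem~17 of \citet{narasimhan2015consistent} as a black box with effective sample size $MN$, and converts $\expectx{Z}{\norm{\widehat{\vect{\gamma}}(Z)-\vect{\gamma}(Z)}_1}$ into the per-output average, yielding the same three-term bound. Where you genuinely depart from the paper is your final paragraph, and your instinct there is correct: the paper never addresses the fact that the expanded points $\{[x^{(n)},m]\}_{m=1}^M$ built from a single draw $x^{(n)}$ are dependent, so the black-box application of Theorem~17 to $MN$ ``i.i.d.'' augmented samples is not justified as written -- this is a real lacuna in the published argument, not a detail you are missing. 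Your proposed repair via bounded differences over the $N$ genuinely independent instances is the natural one, but note its consequence: perturbing one original instance moves each pooled confusion entry by up to $1/N$ (not $1/(MN)$, since all $M$ dependent copies change together), so McDiarmid delivers a concentration term scaling as $\sqrt{1/N}$ rather than the claimed $\sqrt{1/(MN)}$. The $MN$ denominator in the theorem statement would require the $M$ labels to contribute independent randomness per instance (e.g., conditional independence of $Y_1,\dots,Y_M$ given $X$), which the paper explicitly does not assume -- it stresses that outputs may be highly correlated. So your ``delicate point'' is exactly right, and carrying your plan through honestly would either add such an assumption or weaken the stated rate; the paper's proof simply elides the issue.
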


\section{Additional Discussion of Averaged Multioutput Metrics}
\begin{table*}[th]
  \begin{center}
    \caption{Confusions and performance metrics for three averagings}
    \label{table:perf}
    \begin{tabular}{lcc}
      \toprule
      Averaging & Confusion & Performance Metric\\
      \midrule
      Micro-averaging & $\sConf_{i,j}(\vect{h}) = \frac{1}{MN}\sum_{m=1}^{M}\sum_{n=1}^{N}\sConf_{m,i,j}^{(n)}(h_m)$ &  $\Psi_{\text{micro}}(\mathcal{A}(\vect{h})) = \psi\left(\vect{\sConf}(\vect{h})\right) $\\
      Instance-averaging  &$ (\sConf_n)_{i,j}(\vect{h}) = \frac{1}{M}\sum_{m=1}^{M}\sConf_{m,i,j}^{(n)}(h_m)$ &
      $\Psi_{\text{instance}}(\mathcal{A}(\vect{h})) = \frac{1}{N}\sum_{n=1}^{N}\psi\left(\vect{\sConf}_n(\vect{h})\right)$ \\
      Macro-averaging   & $(\sConf_m)_{i,j}(\vect{h}) = \frac{1}{N}\sum_{n=1}^{N}\sConf_{m,i,j}^{(n)}(h_m)$ &
      $\Psi_{\text{macro}}(\mathcal{A}(\vect{h})) = \frac{1}{M}\sum_{m=1}^{M}\psi\left(\vect{\sConf}_m(h_m)\right)$\\
      \bottomrule
    \end{tabular}

  \end{center}
\end{table*}

The most common technique for constructing multioutput metrics is by averaging multiclass performance metrics, which corresponds to particular settings of $\mathcal{A}(\vect{h})$. Averaged multiclass metrics are constructed by averaging with respect to outputs (instance-averaging), with respect to examples separately for each output (macro-averaging), or with respect to both outputs and examples (micro-averaging). The confusions and performance metrics for micro-, macro- and instance-averaging are straightforward to derive from their definitions, and are as shown in Table~\ref{table:perf}.

Now we turn our attention to characterizing the Bayes optimal classifiers for averaged multioutput metrics. Our first observation is that micro-averaging and instance-averaging, while seemingly quite different in terms of samples, are in fact equivalent as population metrics.
Note that our definitions of population metrics directly follow from the multilabel classification definitions established by ~\citet{NIPS2015_5883}

\begin{proposition}[Micro- and Instance-averaging are equivalent at population level]\label{micro-instance-equivalent}
   Given $\Psi$, for any $\vect{h}$, $\mathcal{U}_{\text{instance}}(\vect{h}) = \mathcal{U}_{\text{micro}}(\vect{h}) = \Psi\left(\frac{1}{M}\sum_{m=1}^M \mat{C}(h_m)\right),$ and consequently, $\vect{h}^*_{\Psi_{\text{instance}}} = \vect{h}^*_{\Psi_{\text{micro}}}$
\end{proposition}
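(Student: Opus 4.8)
The plan is to express both population utilities as functions of the single averaged population confusion matrix $\bar{\mat{C}} := \frac{1}{M}\sum_{m=1}^M \mat{C}(h_m)$ and show they coincide for every $\bh$. Recall from the definition of feasible confusions that the population confusion is the expectation of the sample confusion ($\mat{C}(\bh)=\expect{\sConf(\bh)}$), so each population metric is obtained by replacing the sample confusion by its expectation and applying the corresponding performance function. First I would compute the population micro confusion: by linearity of the confusion tensor and the i.i.d.\ assumption, $\expect{\sConf_{i,j}(\bh)} = \frac{1}{MN}\sum_{m,n}\expect{\sConf^{(n)}_{m,i,j}(h_m)} = \frac{1}{M}\sum_m \mat{C}_{m,i,j}(h_m) = \bar{\mat{C}}_{i,j}$, since every summand has the same expectation $\mat{C}_{m,i,j}(h_m)$. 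Hence $\cU_{\text{micro}}(\bh) = \psi(\bar{\mat{C}})$ immediately.

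Second, I would treat instance-averaging, where the decisive observation lives. The per-instance population confusion $\mat{C}_n := \expect{\sConf_n(\bh)}$ has $(i,j)$ entry $\frac{1}{M}\sum_m \expect{\ind{Y_m=i,\,h_m(X)=j}} = \frac{1}{M}\sum_m \mat{C}_{m,i,j}(h_m) = \bar{\mat{C}}_{i,j}$. Crucially, because the samples are i.i.d., this expectation does not depend on $n$: every per-instance population confusion is the same matrix $\bar{\mat{C}}$. Therefore the outer average over examples collapses, $\cU_{\text{instance}}(\bh) = \frac{1}{N}\sum_n \psi(\mat{C}_n) = \frac{1}{N}\sum_n \psi(\bar{\mat{C}}) = \psi(\bar{\mat{C}}) = \cU_{\text{micro}}(\bh)$. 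Since this equality holds for all $\bh$, the two utilities are identical as functions of the classifier, so their maximizing sets coincide and $\vect{h}^*_{\Psi_{\text{instance}}} = \vect{h}^*_{\Psi_{\text{micro}}}$.

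The conceptual hard part, and the only substantive content, is recognizing why the ordering of ``apply $\psi$'' and ``average over examples'' is irrelevant at the population level even though it matters on finite samples: by Jensen's inequality $\frac{1}{N}\sum_n\psi(\sConf_n) \neq \psi\bigl(\frac{1}{N}\sum_n \sConf_n\bigr)$ for nonlinear $\psi$, so the equivalence is genuinely a population phenomenon. The resolution is that once each sample confusion is replaced by its population expectation, every per-instance confusion becomes the identical deterministic matrix $\bar{\mat{C}}$, so the Jensen gap vanishes because there is no per-instance variability left to exploit. A minor point to check is that the computation is unchanged for randomized classifiers $\bh \in \cH_r$, since the scores $h_m^j(x)\in[0,1]$ enter the expectation linearly through $\int_x \eta^m_i(x)\,h_m^j(x)\,d\bP(x)$; no modification is needed, and I expect no further obstacles.
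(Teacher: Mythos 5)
Your proof is correct and takes essentially the same route as the paper's: both compute the population micro- and per-instance confusions as expectations of the sample confusions, observe that each per-instance population confusion is the same matrix $\frac{1}{M}\sum_{m=1}^M \mat{C}(h_m)$ independently of $n$, and collapse the outer average so that both utilities equal $\psi$ applied to that common matrix, whence the maximizers coincide. Your added remarks on the vanishing Jensen gap and on randomized classifiers are sound commentary but do not constitute a different argument.
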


\paragraph{Proof}
  For micro-averaging, at the population level, instead of calculating the confusion from the samples, we replace it{} by the expectation of the confusions:
\begin{align*}
C_{i,j}(\vect{h})
&= \expect{\sConf_{i,j}(\vect{h})}\\
&= \frac{1}{M}\sum_{m=1}^{M}\expectx{n}{\sConf_{m,i,j}^{(n)}(h_m)}\\
&= \frac{1}{M}\sum_{m=1}^{M}\prob{Y_m=i, \hypo_m(X)=j}
\end{align*}
and the performance at population level is given by
\[
\mathcal{U}_{\text{micro}}(\vect{h}) = \psi\left(\vect{\conf}(\vect{h})\right)
\]

Similarly, for instance-averaging, we replace the sample confusion matrix by its expectation:
\begin{align*}
(C_{n})_{i,j}(\vect{h})
&= \expect{(\sConf_n)_{i,j}(\vect{h})}\\
&= \frac{1}{M}\sum_{m=1}^{M}\expectx{n}{\sConf_{m,i,j}^{(n)}(h_m)}\\
&= \frac{1}{M}\sum_{m=1}^{M}\prob{Y_m=i, \hypo_m(X)=j}
\end{align*}
Since $(\mat{C}_{n})(\vect{h})$ is the same for every $n$, we can write it as $\mat{C}(\vect{h})$ instead. The performance at population level is then given by
\begin{align*}
\mathcal{U}_{\text{instance}}(\vect{h})
&= \frac{1}{N}\sum_{n=1}^{N}\psi\left(\vect{\conf}(\vect{h})\right)\\
&= \psi\left(\vect{\conf}(\vect{h})\right)
\end{align*}

For loss-based performance metric, define $\mat{C}_m(h_m) = \expectx{n}{\mat{\sConf}_{m}^{(n)}(h_m)}$, the utility of $\Psi^{\mat{L}}$ is given by
\begin{align*}
\mathcal{U}_\text{micro}(\vect{h}) = \mathcal{U}_\text{instance}(\vect{h})
&= \psi^{\mat{L}}(\mat{C}(\vect{h}))\\
&= 1 - \dprod{\mat{\loss}, \mat{\conf}(\vect{h})}\\
&= 1 - \dprod{\mat{\loss}, \frac{1}{M}\sum_{m=1}^{M}\mat{C}_m(h_m)}\\
&=  1 -  \frac{1}{M}\sum_{m=1}^{M}\dprod{\mat{\loss}, \mat{C}_m(h_m)}
\end{align*}

Therefore, the optimal classifier $\vect{h}^*$ that maximize $\mathcal{U}$ is also the one that minimize $\frac{1}{M}\sum_{m=1}^{M}\dprod{\mat{\loss}, \mat{C}_m(h_m)}$. This is equivalent to finding the minimum of each of $\dprod{\mat{\loss}, \mat{C}_m(h_m)}$ independently.

Similar to \cite{narasimhan2015consistent}, the optimal classifier $\vect{\hypo}^*:\instS\to\left[K\right]^M$ satisfies
\[
h^{*}_m(x) \in \argmin_{k\in\range{\classNum}}\vect{L}_k^\intercal\vect{\marg}^m(x)
\]
and we call $h^{*}$ the $\Psi^\mat{L}$ optimal classifier.

\section{Experiments Details and More Results}
 We report the full results for micro- and macro-averaging on benchmark datasets in Table~\ref{table-uci-start-appendix}. The algorithms evaluated are: (1) multi-output logistic regression classifier for each label (LogReg), (2) k-nearest neighbor with k as 5 (KNN), (3) Random Forest with max depth as 3 (RF), (4) Decision Tree with max depth as 3 (DT), (5) consistent logistic regression (C-LogReg), (6) consistent Decision Tree (C-DT). (1) and (5) are linear and the rest are non-linear classifiers. The hyper-parameters are chosen using double-loop cross-validation.

 As expected, C-DT or C-LogReg always gives the best performance. The difference between C-DT and C-LogReg comes from their consumption of class probabilities from different base learners.

    \begin{table*}[htb]
    \caption{Comparison of performance for multi-output over
    logistic regression (LogReg), k-nearest neighbors (KNN), Random Forests (RF), Decision Tree (DT), consistent multi-output logistic regression classifier (C-LogReg), consistent Decision Tree classifier (C-DT) by micro- and macro-averaging under Ordinal, Micro-F1 and Weighted$_{\bf{\frac{1}{2}}}$ performance metrics. Datasets come from UCI Machine Learning Repository and all results average over 100 iterations with 80\%-20\% train-test split. The last two columns are the proposed post-processing which we found always improved performance.
    }
    \label{table-uci-start-appendix}
    \begin{adjustbox}{center}
        \begin{small}
        \begin{sc}
        \begin{tabular}{|l|c|c|c|c|c|c|}
        \hline
        \rule{0pt}{13pt} Dataset & LogReg & KNN & RF & DT & C-LogReg & C-DT\\[5pt]
        \hline
        \multicolumn{7}{|c|}{\textnormal{Comparison of performance on \textbf{Ordinal} metric for six algorithms on benchmark datasets by \textbf{Micro-averaging}}} \\
        \hline
        Car    & 0.6038$\pm$0.0128 & 0.5799$\pm$0.0126 & 0.6112$\pm$0.0169 & 0.6170$\pm$0.0228 &  \B 0.6962$\pm$0.0075 & 0.6961$\pm$0.0076\\

        Nursery  & 0.7047$\pm$0.0090 & 0.6223$\pm$0.0035 & 0.7084$\pm$0.0148 & 0.7173$\pm$0.0132 & \B 0.7671$\pm$0.0032 & 0.7666$\pm$0.0030\\

        CMC    & 0.7777$\pm$0.0109 & 0.7443$\pm$0.0100 & 0.7644$\pm$0.0123 & 0.7654$\pm$0.0137 & \B 0.7918$\pm$0.0098 & 0.7818$\pm$0.0097\\

        Phish    & 0.7967$\pm$0.0087 & 0.7787$\pm$0.0103 & 0.7888$\pm$0.0157 & 0.7981$\pm$0.0096 & 0.8016$\pm$0.0084 & \B 0.8070$\pm$0.0089\\

        Student    & 0.7704$\pm$0.0122 & 0.7585$\pm$0.0130 & 0.7778$\pm$0.0119 & 0.7792$\pm$0.0119 & 0.7797$\pm$0.0122 & \B 0.7889$\pm$0.0122\\

        \hline
        \multicolumn{7}{|c|}{\textnormal{ Comparison of performance on \textbf{Micro-F1} metric for six algorithms on benchmark datasets by \textbf{Micro-averaging}}} \\
        \hline
        Car    & 0.2775$\pm$0.0191 & 0.1573$\pm$0.0137 & 0.2784 $\pm$ 0.0231 & 0.2941$\pm$0.0194 & \B 0.3322$\pm$0.0175 &  0.3312$\pm$0.0187\\

        Nursery  & 0.4836$\pm$0.0084 & 0.2896$\pm$0.0067 & 0.4531$\pm$0.0307 & 0.4815$\pm$0.0159 & 0.4961$\pm$0.0074 & \B 0.5116$\pm$0.0070\\

        CMC    & 0.4928$\pm$0.0200 & 0.4155$\pm$0.0195 & 0.4870$\pm$0.0176 & 0.4737$\pm$0.0188 & \B 0.4959$\pm$0.0189 & 0.4767$\pm$0.0199\\

        Phish    & 0.6931$\pm$0.0175 & 0.6860$\pm$0.0182 & 0.6868$\pm$0.0226 & 0.7029$\pm$0.0168 & 0.6941$\pm$0.0182 & \B0.7040$\pm$0.0191\\

        Student    & 0.2291$\pm$0.0232 & 0.2240$\pm$0.0273 & 0.2299$\pm$0.0286 & 0.2413$\pm$0.0276 & 0.2567$\pm$0.0247 & \B 0.2875$\pm$0.0269\\

        \hline
        \multicolumn{7}{|c|}{\textnormal{ Comparison of performance on \textbf{Weighted$_{\bf{\frac{1}{2}}}$} metric for six algorithms on benchmark datasets by \textbf{Micro-averaging}}} \\
        \hline
        Car    & 0.1023$\pm$0.0124 & 0.0754$\pm$0.0095 & 0.0526$\pm$0.0094 & 0.0792$\pm$0.0245 & \B0.2487$\pm$0.0152 & 0.2482$\pm$0.0153\\

        Nursery  & 0.2218$\pm$0.0144 & 0.2203$\pm$0.0045 & 0.2361$\pm$0.0172 & 0.2405$\pm$0.0253 & 0.3562$\pm$0.0060 & \B 0.3563$\pm$0.0060\\

        CMC    & 0.1206$\pm$0.0098 & 0.1228$\pm$0.0095 & 0.1124$\pm$0.0115 & 0.1151$\pm$0.0109 & \B 0.1947$\pm$0.0118 & 0.1922$\pm$0.0119\\

        Phish    & 0.2260$\pm$0.0164 & 0.2270$\pm$0.0169 & 0.2286$\pm$0.0211 & 0.2541$\pm$0.0194 & \B 0.3151$\pm$0.0210 & 0.3139$\pm$0.0208\\

        Student    & 0.3073$\pm$0.0195 & 0.3048$\pm$0.0164 & 0.3544$\pm$0.0190 & 0.3344$\pm$0.0224 & \B0.3721$\pm$0.0209 & 0.3716$\pm$0.0215\\

        \hline
        \multicolumn{7}{|c|}{\textnormal{ Comparison of performance on \textbf{Ordinal} metric for six algorithms on benchmark datasets by \textbf{Macro-averaging}}} \\
        \hline
        Car    & 0.6060$\pm$0.0127 & 0.5788$\pm$0.0107 & 0.6131$\pm$0.0176 & 0.6185$\pm$0.0219 & 0.6960$\pm$0.0078 & \B0.6972$\pm$0.0085\\

        Nursery  & 0.7043$\pm$0.0088 & 0.6230$\pm$0.0036 & 0.7022$\pm$0.0145 & 0.7164$\pm$0.0141 & \B 0.7679$\pm$0.0028 & 0.7672$\pm$0.0028\\

        CMC    & 0.7748$\pm$0.0104 & 0.7445$\pm$0.0095 & 0.7643$\pm$0.0119 & 0.7642$\pm$0.0146 & \B 0.7911$\pm$0.0087 & 0.7804$\pm$0.0083\\

        Phish    & 0.7977$\pm$0.0095 & 0.7871$\pm$0.0150 & 0.7871$\pm$0.0150 & 0.7983$\pm$0.0094 & 0.8015$\pm$0.0096 & \B0.8065$\pm$0.0093\\

        Student    & 0.7698$\pm$0.0119 & 0.7581$\pm$0.0118 & 0.7761$\pm$0.0136 & 0.7778$\pm$0.0132 & 0.7785$\pm$0.0111 & \B0.7886$\pm$0.0114\\

        \hline
        \multicolumn{7}{|c|}{\textnormal{ Comparison of performance on \textbf{Micro-F1} metric for six algorithms on benchmark datasets by \textbf{Macro-averaging}}} \\
        \hline
        Car    & 0.2759$\pm$0.0163 & 0.1562$\pm$0.0153 & 0.2762$\pm$0.0191 & 0.2939$\pm$0.0184 & \B 0.3336$\pm$0.0167 & 0.3289$\pm$0.0166\\

        Nursery  & 0.4862$\pm$0.0112 & 0.2784$\pm$0.0072 & 0.4510$\pm$0.0333 & 0.4763$\pm$0.0241 & 0.5052$\pm$0.0072 & \B 0.5190$\pm$0.0074\\

        CMC    & 0.4898$\pm$0.0210 & 0.4113$\pm$0.0195 & 0.4837$\pm$0.0206 & 0.4722$\pm$0.0211 & \B 0.4909$\pm$0.0188 & 0.4737$\pm$0.0204\\

        Phish    & 0.6853$\pm$0.0165 & 0.6767$\pm$0.0162  & 0.6826$\pm$0.0199 & 0.6938$\pm$0.0177 & 0.6915$\pm$0.0182 & \B 0.7039$\pm$0.0181\\

        Student    & 0.1728$\pm$0.0252 & 0.1887$\pm$0.0267 & 0.1189$\pm$0.0143 & 0.1522$\pm$0.0208 & 0.2444$\pm$0.0218 & \B0.2848$\pm$0.0292\\

        \hline
        \multicolumn{7}{|c|}{\textnormal{ Comparison of performance on \textbf{Weighted$_{\bf{\frac{1}{2}}}$} metric for six algorithms on benchmark datasets by \textbf{Macro-averaging}}} \\
        \hline
        Car    & 0.1027$\pm$0.0111 & 0.0736$\pm$0.0097 & 0.0529$\pm$0.0099 & 0.0788$\pm$0.0238 &  \B0.2483$\pm$0.0121 & 0.2480$\pm$0.0120\\

        Nursery  & 0.2213$\pm$0.0137 & 0.2202$\pm$0.0044 & 0.2340$\pm$0.0171 & 0.2367$\pm$0.0255 & \B 0.3564$\pm$0.0055 & \B 0.3564$\pm$0.0055\\

        CMC    & 0.1187$\pm$0.0108 & 0.1216$\pm$0.0116 & 0.1092$\pm$0.0120 & 0.1111$\pm$0.0125 & \B 0.1938$\pm$0.0130 & 0.1915$\pm$0.0128\\

        Phish    & 0.2259$\pm$0.0134 & 0.2286$\pm$0.0158 & 0.2312$\pm$0.0207 & 0.2564$\pm$0.0166 & \B 0.3175$\pm$0.0163 & 0.3161$\pm$0.0158\\

        Student    & 0.3117$\pm$0.0214 & 0.3047$\pm$0.0209 & 0.3540$\pm$0.0231 & 0.3360$\pm$0.0233 & \B0.3706$\pm$0.0242 & 0.3700$\pm$0.0261\\
        \hline
        \end{tabular}
        \end{sc}
        \end{small}
      \end{adjustbox}
    \vskip -0.1in
  \end{table*}

\end{document}